\newcommand{\Dom}{\mathrm{Dom}\,}
\newcommand{\E}{\mathbb{E}}
\newcommand{\R}{\mathbb{R}}
\newcommand{\dt}{dt}
\theoremstyle{plain}% Theorem-like structures provided by amsthm.sty
\newtheorem{theorem}{Theorem}[section]
\newtheorem*{theorem*}{Theorem}
\newtheorem{lemma}[theorem]{Lemma}
\newtheorem{corollary}[theorem]{Corollary}
\theoremstyle{definition}
\theoremstyle{remark}
\newtheorem{remark}[theorem]{Remark}
\numberwithin{equation}{section}
\begin{document}

\articletype{RESEARCH ARTICLE}% Specify the article type

\title{A Malliavin calculus approach to score functions in diffusion generative models}

\author{
\name{Ehsan Mirafzali\textsuperscript{a}\thanks{CONTACT Ehsan Mirafzali. Email: smirafza@ucsc.edu}, 
Frank Proske\textsuperscript{b}, 
Utkarsh Gupta\textsuperscript{a}, \\
Daniele Venturi\textsuperscript{c}, and 
Razvan Marinescu\textsuperscript{a}}
\affil{\textsuperscript{a}Department of Computer Science, University of California Santa Cruz; \\
\textsuperscript{b}Department of Mathematics, University of Oslo; \\
\textsuperscript{c}Department of Applied Mathematics, University of California Santa Cruz}
}

\maketitle

\begin{abstract}
Score-based diffusion generative models have recently emerged as a standard tool for modelling complex data distributions. These models aim at learning the score function, which defines a map from a known probability distribution to the target data distribution via deterministic or stochastic differential equations (SDEs). The score function is typically estimated from data using a variety of approximation techniques, such as denoising or sliced score matching, Hyv\"arien's method, or Schr\"odinger bridges. In this paper, we derive an exact, closed-form expression for the score function for a broad class of nonlinear diffusion generative models. Our approach combines modern stochastic analysis tools such as Malliavin derivatives and their adjoint operators (Skorokhod integrals or Malliavin Divergence) with a new Bismut-type formula. The resulting expression for the score function can be written entirely in terms of the first and second variation processes, with all Malliavin derivatives systematically eliminated, thereby enhancing its practical applicability. The theoretical framework presented in this work offers a principled foundation for advancing score estimation methods in generative modelling, enabling the design of new sampling algorithms for complex probability distributions. Our results can be extended to broader classes of stochastic differential equations, opening new directions for the development of score-based diffusion generative models.
\end{abstract}

\begin{keywords}
Score function; Bismut formula; Malliavin calculus; stochastic differential equations; generative modelling
\end{keywords}

\section{Introduction}
\label{sec:intro}

Score-based diffusion generative models \cite{song2021scorebased,pmlr-v37-sohl-dickstein15} have recently emerged as a powerful tool for modelling complex data distributions in a variety of applications ranging from finance \cite{10.1145/3604237.3626876} to image synthesis \cite{NEURIPS2021_49ad23d1, 9878449}. These models aim to estimate the so-called score function \(\nabla_y \log p_t(y)\), which governs the transformation of a simple reference distribution into the target distribution via a time-reversed nonlinear diffusion process.

Historically, score-based generative models have been built upon diffusion processes described by linear It\^o stochastic differential equations, i.e., SDEs with drift and diffusion coefficients that are linear in the state variables. Such a linearity assumption ensures analytical tractability and comes with other desirable properties. For example, the Fokker--Planck equation~\cite{https://doi.org/10.1002/andp.19143480507, planck1917satz} associated with the diffusion process admits a Gaussian stationary distribution and Gaussian transition densities, which have been used in the development of techniques such as Denoising Diffusion Probabilistic Models (DDPM) \cite{NEURIPS2020_4c5bcfec}.
While the linearity assumption in diffusion processes has been instrumental in developing the first generation of score-based generative models, it also represents an overly restrictive simplification. Many real-world systems exhibit nonlinear dynamics, where both the drift and diffusion coefficients depend nonlinearly on the state variables. Extending diffusion models to such settings poses substantial challenges. 
Specifically, unlike linear SDEs, nonlinear SDEs typically lack closed-form expressions for their transition probability densities, making the 
computation of the score function \( \nabla_y \log p_t(y) \) challenging. 

%The quest to represent the score function \(\nabla \log p_t(y)\) for diffusion processes has a rich history, particularly within stochastic analysis and Malliavin calculus. Several Bismut-type formulae have been proposed in the past, each offering probabilistic representations for gradients of functionals associated with SDEs. For instance, the Bismut-Elworthy-Li formula was introduced by \cite{alma990005308880204808, ELWORTHY1994252, Elworthy_1982} to provide a representation for the gradient of the heat kernel on Riemannian manifolds. It expresses \(\nabla P_t f(x) = \mathbb{E} [f(X_t) \int_0^t v_s \cdot dB_s]\), where \(v_s\) is a control process derived from the stochastic flow. In this paper we introduce a new Bismut-type formula, tailored to compute \(\nabla \log p_t(y)\) for general nonlinear SDEs. Unlike the Bismut-Elworthy-Li formula, it directly targets the score function, leveraging the first and second variation processes to yield a practical, explicit expression. 
%
%This intractability has historically limited the applicability of nonlinear diffusion-based generative models, creating a gap between their theoretical potential and practical effectiveness. 

The purpose of this paper is to bridge this gap by establishing a new rigorous link between score-based diffusion models and Malliavin calculus, a stochastic calculus of variations introduced by Paul Malliavin in the 1970s \cite{malliavin:78:stochastic, 10.5555/262387}. Originally devised to investigate hypoelliptic operators and Stochastic Partial Differential Equations (SPDEs), Malliavin calculus provides a rigorous framework for analysing the smoothness of SDE solutions and their densities. By leveraging Malliavin derivatives and a novel Bismut-type formula we derive in this paper, which is distinct from the classical Bismut--Elworthy--Li formula \cite{alma990005308880204808,ELWORTHY1994252, Elworthy_1982} tailored to heat kernels, we derive an explicit (closed-form) representation of the score function for general nonlinear diffusion processes. The resulting formula is expressed entirely in terms of first and second variation processes \cite{kunita1997stochastic, bismut1981martingales}, with all Malliavin derivatives systematically eliminated.
In particular, our framework generalises the state‐independent results of \cite{mirafzali2025malliavincalculusscorebaseddiffusion} to fully nonlinear SDEs with state‐dependent diffusion coefficients. For the numerical experiments of this framework, see the same work \cite{mirafzali2025malliavincalculusscorebaseddiffusion}. Recently, complementary advances have appeared. First, an infinite-dimensional extension develops score-based diffusion modelling on separable Hilbert spaces using Malliavin tools and an operator-theoretic formulation \cite{mirafzali2025scorebaseddiffusionmodelsinfinite}. A related direction employs Malliavin–Gamma calculus to treat random fields, providing an abstract Hilbertian setting for score-based diffusion generative models \cite{greco2025malliavingammacalculusapproachscore}. In parallel, a divergence–kernel representation is derived for scores of random dynamical systems, accommodating multiplicative noise and yielding additive-noise and short-time specialisations \cite{ni2025divergencekernelmethodscoresrandom}. Finally, a path-space integration-by-parts approach addresses conditioning and diffusion bridges, producing a generalised Tweedie-type score identity and numerically robust training for singular rewards \cite{pidstrigach2025conditioning}.

Beyond establishing the link with Malliavin calculus, our contribution is to move from gradient identities for expectations to score formulae that are usable in diffusion models. The central obstacle is that the classical Bismut--Elworthy--Li framework returns \( \nabla_x \mathbb{E}[f(X_T^x)] \) via Skorokhod (anticipative) integrals, whereas score-based methods require the density-level quantity \( \nabla_x \log p_t(x) \). We address this by first replacing the anticipative objects with adapted, pathwise quantities, time integrals taken along the forward SDE, so that the resulting estimators are numerically implementable in both the linear regime (recovering the Fokker--Planck score) and the nonlinear setting with state-independent diffusion. Taken together, these steps convert the abstract BEL-type representations into practical, computationally robust algorithms for score-based diffusion modelling.

The implications of this new theoretical framework are important. 
First, Malliavin calculus provides a tool for overcoming the analytical challenges posed by nonlinear diffusion processes. Second, its rigorous foundations 
enable the development and analysis of new classes of generative  
models beyond linear diffusion.  From a practical standpoint, 
this can potentially yield more accurate generative models for 
applications such as high-fidelity image synthesis, fluid dynamics \cite{WickMalliavin2013}, and protein folding 
simulations in molecular dynamics \cite{Abramson2024}.

%By uniting stochastic analysis with generative modelling, we advance both the mathematical foundations and practical utility of these models.

%Score matching techniques, pivotal in machine learning, also inform our study. Hyv\"{a}rinen's method \cite{JMLR:v6:hyvarinen05a} estimates \(\nabla \log p(x)\) by minimising a Fisher divergence, bypassing explicit density computation. While effective for parametric models, it struggles with the nonparametric complexity of nonlinear SDEs. Sliced score matching \cite{song2019slicedscorematchingscalable}, a variant designed to handle high-dimensional data by projecting onto random directions, similarly lacks the conditioning on initial conditions necessary for diffusion generative modelling. Denoising score matching \cite{6795935} refines this by approximating the score via noise-perturbed data, as in DDPMs, but relies on Gaussian assumptions ill-suited to nonlinear dynamics. Our approach, by contrast, delivers an exact, non-parametric score expression rooted in the SDE's stochastic structure, offering a theoretical complement to these empirical methods.

This paper is organised as follows. In Section \ref{sec:problem-statement} we state our main theorem expressing the score function  \(\nabla_y \log p_t(y)\) in terms of first and second variation processes via a Bismut-type formula. Unlike the Bismut-Elworthy-Li formula, our formula directly targets the score function, leveraging the first and second variation processes to yield a practical closed-form expression.  In Section \ref{sec:malliavin} we briefly review basic notions of Malliavin calculus and prove various theorems on covering vector fields. In Section \ref{sec:variation} we discuss first and second variation processes in detail. In Section \ref{sec:proofmain}  we prove our main theorem, i.e., Theorem \ref{thm:main}. The proof includes the full derivation of the score function for nonlinear SDEs, along with several useful lemmas that assist in the process. In Section \ref{sec:simplifiedformula} we provide a simplified expression of the score function for nonlinear SDEs with state-independent diffusion coefficients. %The main findings, their implications, and potential future research directions are summarised in Section \ref{sec:summary}. 
%Additional technical details, including properties of covering vector fields and supporting proofs are presented  in Appendix \ref{proof:theorem3.2.9}.

\section{Problem statement and main result}
\label{sec:problem-statement}
Consider the \(m\)-dimensional stochastic differential equation
\begin{equation}
dX_t = b(t, X_t)\,dt + \sigma(t, X_t)\,dB_t, 
\qquad X_0 = x, 
\qquad 0 \le t \le T,
\label{sde}
\end{equation}
where \(X_t \in \mathbb{R}^m\), \(B_t \in \mathbb{R}^d\) is a standard Brownian motion, 
\(b : [0,T] \times \mathbb{R}^m \to \mathbb{R}^m\) is the drift coefficient, and 
\(\sigma : [0,T] \times \mathbb{R}^m \to \mathbb{R}^{m \times d}\) is the diffusion coefficient, 
with \(\sigma^{l}(t,X_t)\) denoting its \(l\)-th column for \(l = 1,\ldots,d\). 
The initial condition \(x \in \mathbb{R}^m\) is deterministic, and \(T > 0\) is a fixed terminal time. 
We assume that \(b\) and \(\sigma\) are \(\mathcal{C}^{2}\) functions with bounded derivatives, which ensures the existence of a unique strong solution to~\eqref{sde}. 

% Let 
% \[
% \gamma_{X_T}
%   := \bigl(\,\langle D X_T^{i}, D X_T^{j} \rangle_{L^{2}([0,T])}\bigr)_{i,j=1}^{m}
% \]
% denote the Malliavin covariance matrix of \(X_T\).  
% If \(X_T \in \mathbb{D}^{\infty}\) and 
% \(\gamma_{X_T}\) is almost surely invertible with 
% \(\mathbb{E}\!\bigl[\|\gamma_{X_T}^{-1}\|^{p}\bigr] < \infty\) for every \(p > 0\), 
% then the terminal value \(X_T\) admits a smooth probability density 
% \(p(y)\) with respect to the Lebesgue measure on \(\mathbb{R}^m\), (see \cite{Nualart_Nualart_2018}, sec 7).

In the framework of score-based diffusion models, the SDE \eqref{sde} represents the so-called {\em forward diffusion process}, which gradually transforms the initial data distribution into some other distribution over the time interval \([0, T]\). To enable generative sampling, we define the corresponding {\em reverse-time diffusion process}, which runs backwards in time from \(T\) to \(0\). Based on the general structure of reverse-time SDEs, this process is given by
\begin{align}
dX_t = \left\{ b(t, X_t) - \nabla \cdot \left[\sigma(t, X_t) \sigma(t, X_t)^\top\right] + \sigma(t, X_t) \nabla_{X_t} \log p_t(X_t) \right\} \, dt + \sigma(t, X_t) \, d\bar{B}_t, 
\label{sde1}
\end{align}
for all $0 \leq t \leq T$, where \(\nabla \cdot [\sigma(t, X_t) \sigma(t, X_t)^\top]\) denotes the divergence of the matrix \(\sigma(t, X_t) \sigma(t, X_t)^\top\) (a correction term arising from the Itô interpretation of the SDE), \(\nabla_{X_t} \log p_t(X_t)\) is the score function, and \(d\bar{B}_t\) represents the increment of a reverse-time Brownian motion.
The reverse-time SDE effectively reverses the forward diffusion by leveraging 
the score function to guide the process from a noise distribution back to the 
original data distribution.
\noindent
Our goal is to compute the score function
\[
\nabla_y \log p_t(y) = \frac{\nabla_y p_t(y)}{p_t(y)}, \quad \partial_{y_k} \log p_t(y) = \frac{\partial_{y_k} p_t(y)}{p_t(y)}, \quad k = 1, \ldots, m,
\]
which is essential for implementing the reverse diffusion process in generative tasks. In particular, we are interested in \(\nabla_y \log p_T(y)\)\footnote{For generative tasks, one requires the time-dependent score \(\nabla_y\log p_t(y)\) for each intermediate time \(t\), which is obtained by evaluating the integrals discussed in Theorem \ref{thm:main} at every such \(t\).}. To achieve this goal, we leverage Malliavin calculus and a new Bismut-type formula we develop in this paper. As we shall see hereafter, this allows us to express the score in terms of the first variation process \(Y_t = \partial X_t/\partial x\) and the second variation process \(Z_t = \partial^2 X_t/\partial x^2\), which capture the first- and the second-order sensitivity of the solution \(X_t\) relative to changes in the initial condition \(x\).   
Our main result is the following theorem.

%Our main result is presented Theorem \ref{thm:main}, stated below, provides an explicit representation of \(\nabla_y \log p_t(y)\) that is avoids abstract Malliavin derivatives, it computationally practical for applications in score-based generative modelling.

\begin{theorem}[Skorokhod integral representation theorem for the score function]
\label{thm:main}
Let \(X_t\) be the solution to the stochastic differential equation
\begin{align*}
dX_t = b(t, X_t) \, dt + \sigma(t, X_t) \, dB_t, \quad X_0 = x_0,
\end{align*}
where \(b: [0, T] \times \mathbb{R}^m \to \mathbb{R}^m\) and \(\sigma: [0, T] \times \mathbb{R}^m \to \mathbb{R}^{m \times d}\) are sufficiently smooth functions, and \(B_t\) is a \(d\)-dimensional Brownian motion. Define the first variation process
\begin{align*}
dY_t = \partial_x b(t, X_t) Y_t \, dt + \sum_{l=1}^d \partial_x \sigma^l(t, X_t) Y_t \, dB_t^l, \quad Y_0 = I_m,
\end{align*}
and the Malliavin covariance matrix of $X_T$ as 
\begin{equation*}
 \gamma_{X_T} = \int_0^T Y_T Y_s^{-1} \sigma(s, X_s) \sigma(s, X_s)^\top (Y_s^{-1})^\top Y_T^\top \, ds
\end{equation*}
which we assume to be invertible almost surely. Furthermore, define 
the random field \( u_t(x) \) and the random vector \( F_k \) as follows\footnote{Note that \( u_t(x) \in \mathbb{R}^d \) since \( x^\top \) is a \( 1 \times m \) row vector, \( Y_t^{-1} \) is an \( m \times m \) matrix, and \( \sigma(t, X_t) \) is an \( m \times d \) matrix, resulting in a \( 1 \times d \) row vector. Moreover, \( u_t(x) \) is adapted to the natural Brownian filtration \( \mathcal{F}_t \) for each fixed \( x \), as it depends only on \( Y_t^{-1} \) and \( \sigma(t, X_t) \), both of which are adapted processes.}
\begin{align*}
u_t(x) &= x^\top Y_t^{-1} \sigma(t, X_t), \quad x \in \mathbb{R}^m, \\
F_k &= Y_T^\top \gamma_{X_T}^{-1} e_k, \quad \text{with} \quad F_k^j = e_j^\top Y_T^\top \gamma_{X_T}^{-1} e_k,
\end{align*}
where \( Y_t \) is the first variation process, \( Y_T \) is its value at time \( T \), \( Y_t^{-1} \) its inverse at time \( t \), and \( e_k \) is the \( k \)-th standard  basis vector in \( \mathbb{R}^m \).
%
%By substituting \( F_k \) into \( u_t(x) \), we obtain:
%\begin{align*}
%u_t(F_k) &= (F_k)^\top Y_t^{-1} \sigma(t, X_t) = \left( Y_T^\top \gamma_{X_T}^{-1} e_k \right)^\top Y_t^{-1} \sigma(t, X_t) = e_k^\top \gamma_{X_T}^{-1} Y_T Y_t^{-1} \sigma(t, X_t).
%\end{align*}
%Thus, the process \( u_k(t) = u_t(F_k) = e_k^\top \gamma_{X_T}^{-1} Y_T Y_t^{-1} \sigma(t, X_t) \). Note that we do not calculate the integral of this term directly; instead, we should first evaluate the integral with \( u_t(x) \) and then substitute the random field into it.
%
Then, following \cite{Nualart_Nualart_2018}, for each \( k = 1, \dots, m \), the gradient of the log-density satisfies
\begin{align}
\partial_{y_k} \log {p_T}(y) = -\mathbb{E} \left[ \delta(u_k) \mid X_T = y \right],
\label{score}
\end{align}
where \( \delta(u_k) \) denotes the Skorokhod integral of the process \( u_k(t) \), i.e., the Malliavin divergence, whose explicit representation is given by
\begin{align}
\delta(u_k) &= 
%\left. \int_0^T u_t(x) \cdot dB_t \right|_{x=F_k} - \int_0^T \sum_{j=1}^m \partial_j u_t(F_k) \,\cdot\, D_t F_k^j \, dt \\[0.5em]&= 
\left. \int_0^T u_t(x) \cdot dB_t \right|_{x=F_k}- \int_0^T \sum_{j=1}^m \bigl[ Y_t^{-1} \sigma(t, X_t) \bigr]_j \cdot A_{jk}(t) \, dt \nonumber\\
&\quad + \int_0^T \sum_{j=1}^m \bigl[ Y_t^{-1} \sigma(t, X_t) \bigr]_j \cdot B_{jk}(t) \, dt + \int_0^T \sum_{j=1}^m \bigl[ Y_t^{-1} \sigma(t, X_t) \bigr]_j \cdot C_{jk}(t) \, dt,
\label{score1}
\end{align}

\text{where}

\begin{align*}
A_{jk}(t) &= e_j^\top \bigg[ \sigma(t, X_t)^\top (Y_t^{-1})^\top Z_T^\top - \sigma(t, X_t)^\top (Y_t^{-1})^\top Z_t^\top (Y_t^{-1})^\top Y_T^\top \\
&\quad\quad\quad + \left( Y_T Y_t^{-1} \partial_x \sigma(t, X_t) Y_t \right)^\top \bigg] \gamma_{X_T}^{-1} e_k, \\[0.5em]
B_{jk}(t) &= e_j^\top Y_T^\top \gamma_{X_T}^{-1} \cdot \bigg[ \int_0^t I_1(t,s) \, ds + \int_0^t I_2(t,s) \, ds \bigg] \gamma_{X_T}^{-1} e_k,\\[0.5em]
C_{jk}(t) &= e_j^\top Y_T^\top \gamma_{X_T}^{-1} \cdot \bigg[ \int_t^T I_3(t,s) \, ds + \int_t^T I_4(t,s) \, ds \bigg] \gamma_{X_T}^{-1} e_k,
\end{align*}

\text{with integrands} ($p$ and $q$ below denote components)

\begin{align*}
I^{p,q}_1(t,s) &= \bigg[ \Omega(t) Y_s^{-1}\sigma(s, X_s) \bigg]^p \cdot [Y_T Y_s^{-1} \sigma(s, X_s)]^q, \\[0.5em]
I^{p,q}_2(t,s) &= [Y_T Y_s^{-1} \sigma(s, X_s)]^p \cdot \bigg[ \Omega(t) Y_s^{-1}\sigma(s, X_s) \bigg]^q, \\[0.5em]
I^{p,q}_3(t,s) &= \bigg[ \Omega(t) Y_s^{-1}\sigma(s, X_s) + Y_T \Theta(t,s) \bigg]^p \cdot [Y_T Y_s^{-1} \sigma(s, X_s)]^q, \\[0.5em]
I^{p,q}_4(t,s) &= [Y_T Y_s^{-1} \sigma(s, X_s)]^p \cdot \bigg[ \Omega(t) Y_s^{-1}\sigma(s, X_s) + Y_T \Theta(t,s) \bigg]^q,
\end{align*}

\text{and}

\begin{align*}
\Omega(t) &= Z_T Y_t^{-1} \sigma(t, X_t) - Y_T Y_t^{-1} Z_t Y_t^{-1} \sigma(t, X_t) + Y_T Y_t^{-1} \partial_x \sigma(t, X_t) Y_t, \\[0.5em]
\Theta(t,s) &= -Y_s^{-1} \bigg[ Z_s Y_t^{-1} \sigma(t, X_t) - Y_s Y_t^{-1} Z_t Y_t^{-1} \sigma(t, X_t) \\
&\quad + Y_s Y_t^{-1} \partial_x \sigma(t, X_t) Y_t \bigg] Y_s^{-1} \sigma(s, X_s) + Y_s^{-1} \partial_x \sigma(s, X_s) \bigg(Y_s Y_t^{-1} \sigma(t, X_t) \bigg).
\end{align*}
The second variation process appearing in $\Omega(t)$ and $\Theta(t,s)$ is defined as
\begin{align*}
dZ_t &= \left[ \partial_{xx} b(t, X_t) (Y_t \otimes Y_t) + \partial_x b(t, X_t) Z_t \right] dt \\
&\quad + \sum_{l=1}^d \left[ \partial_{xx} \sigma^l(t, X_t) (Y_t \otimes Y_t) + \partial_x \sigma^l(t, X_t) Z_t \right] dB_t^l,
\end{align*}
with initial condition \( Z_0 = 0 \).
\end{theorem}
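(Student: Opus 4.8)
The plan is to derive \eqref{score}--\eqref{score1} from the classical Malliavin integration-by-parts representation of the logarithmic gradient of a nondegenerate density, and then to eliminate every Malliavin derivative appearing in the resulting Skorokhod integral in favour of the first and second variation processes \(Y\) and \(Z\). Since \(X_T\in\mathbb{D}^{\infty}\) and \(\gamma_{X_T}\) is a.s.\ invertible with inverse moments of every order, \(p_T\) is smooth and, following \citep{Nualart_Nualart_2018}, \(\partial_{y_k}\log p_T(y)=-\,\mathbb{E}[\delta(u_k)\mid X_T=y]\) for the canonical covering vector field \(u_k=\sum_j(\gamma_{X_T}^{-1})^{kj}DX_T^{\,j}\), which satisfies \(\langle DX_T^{\,i},u_k\rangle_{L^2([0,T])}=\delta_{ik}\). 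Using that \(t\mapsto D_sX_T\) solves on \([s,T]\) the same linear equation as the first variation process started from \(\sigma(s,X_s)\) at \(t=s\), uniqueness and the flow identity give \(D_sX_T=Y_TY_s^{-1}\sigma(s,X_s)\); inserting this reproduces the stated Malliavin matrix \(\gamma_{X_T}\) and rewrites \(u_k(t)=e_k^\top\gamma_{X_T}^{-1}Y_TY_t^{-1}\sigma(t,X_t)=u_t(x)\big|_{x=F_k}\) with \(F_k=Y_T^\top\gamma_{X_T}^{-1}e_k\), exactly the objects in the statement.

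It then remains to compute \(\delta(u_k)\) explicitly. I would write \(u_k(t)=\sum_{j=1}^m F_k^{\,j}\,[Y_t^{-1}\sigma(t,X_t)]_j\) with each row \([Y_\cdot^{-1}\sigma(\cdot,X_\cdot)]_j\) adapted, and use linearity of \(\delta\) together with the product rule \(\delta(Gv)=G\,\delta(v)-\langle DG,v\rangle_{L^2}\) (where \(\delta(v)\) is an ordinary Itô integral, by adaptedness). This produces the first term \(\int_0^T u_t(x)\cdot dB_t\big|_{x=F_k}\) of \eqref{score1} plus the correction \(-\sum_j\langle DF_k^{\,j},[Y_\cdot^{-1}\sigma(\cdot,X_\cdot)]_j\rangle_{L^2}\). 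Expanding \(F_k^{\,j}=e_j^\top Y_T^\top\gamma_{X_T}^{-1}e_k\) by the product rule and using \(D_r(\gamma_{X_T}^{-1})=-\gamma_{X_T}^{-1}(D_r\gamma_{X_T})\gamma_{X_T}^{-1}\) splits this correction into a piece carrying \(D_rY_T\) and a piece carrying \(D_r\gamma_{X_T}\).

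The core of the argument, and the step I expect to be the main obstacle, is evaluating these iterated Malliavin derivatives in closed form. Interchanging \(D_r\) with the initial-condition derivative \(\partial_x\) (legitimate under the \(\mathcal{C}^2\)-with-bounded-derivatives hypotheses on \(b,\sigma\)) gives \(D_rY_T=\partial_x(D_rX_T)=\partial_x\bigl(Y_TY_r^{-1}\sigma(r,X_r)\bigr)\); carrying out the differentiation with \(\partial_xY_T=Z_T\), \(\partial_x(Y_r^{-1})=-Y_r^{-1}Z_rY_r^{-1}\) and \(\partial_x\sigma(r,X_r)=\partial_x\sigma(r,X_r)Y_r\) produces exactly \(D_rY_T=\Omega(r)\); the same computation with \(T\) replaced by \(s\) gives \(D_rY_s\) for \(r\le s\), and differentiating \(D_sX_T=Y_TY_s^{-1}\sigma(s,X_s)\) once more in the Brownian path yields \(D_rD_sX_T=\Omega(r)Y_s^{-1}\sigma(s,X_s)\) for \(r>s\) and \(D_rD_sX_T=\Omega(r)Y_s^{-1}\sigma(s,X_s)+Y_T\Theta(r,s)\) for \(r<s\). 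Substituting these into \(D_r\gamma_{X_T}=\int_0^T[(D_rD_sX_T)(D_sX_T)^\top+(D_sX_T)(D_rD_sX_T)^\top]\,ds\) and splitting \(\int_0^T=\int_0^r+\int_r^T\) according to the sign of \(r-s\) generates precisely the kernels \(I_1,I_2\) (on \(\{s<r\}\)) and \(I_3,I_4\) (on \(\{s>r\}\)). If one prefers to avoid the \(D_r\leftrightarrow\partial_x\) interchange, the same identities follow by checking that both sides solve the same linear SDE with the same initial data, and this is the route I would write out in detail, isolating it as a lemma.

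Reassembling, the \(\Omega\)-piece of the correction equals \(-\int_0^T\sum_j[Y_t^{-1}\sigma(t,X_t)]_j\cdot A_{jk}(t)\,dt\) once the bracket defining \(A_{jk}\) is recognized as \(\Omega(t)^\top\) (transposing a product reverses the order and transposes each factor, with the obvious convention for the tensor \(Z\)), and the \(D_r\gamma_{X_T}\)-piece, after the split above, equals \(+\int_0^T\sum_j[Y_t^{-1}\sigma(t,X_t)]_j\cdot(B_{jk}(t)+C_{jk}(t))\,dt\), with \(B_{jk}\) collecting the \(s<t\) kernels and \(C_{jk}\) the \(s>t\) kernels. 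Adding the four contributions gives \eqref{score1}, and substituting it into the Malliavin representation gives \eqref{score}. In parallel I would track that \(Y,Y^{-1},Z\) and \(\gamma_{X_T}^{-1}\) have moments of every order — a consequence of the standing hypotheses — so that all the products lie in \(\mathrm{Dom}\,\delta\) and the conditioning on \(\{X_T=y\}\) is justified; this bookkeeping is routine but genuinely needed for rigour.
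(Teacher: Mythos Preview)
Your proposal is correct and follows essentially the same route as the paper: establish the Bismut-type identity for the covering field \(u_k\), express \(u_k(t)=u_t(F_k)\) via \(D_sX_T=Y_TY_s^{-1}\sigma(s,X_s)\), decompose \(\delta(u_k)\) into the It\^o integral evaluated at \(x=F_k\) plus a correction in \(D_tF_k^j\), and then evaluate \(D_tY_T=\Omega(t)\), \(D_tY_s^{-1}\), and \(D_t\gamma_{X_T}\) through the first and second variation processes, splitting the latter over \(\{s<t\}\) and \(\{s>t\}\) to produce the kernels \(I_1,\dots,I_4\). The only cosmetic difference is that the paper reaches the decomposition by invoking Nualart's substitution formula (Theorem~3.2.9 in \citep{nualart2006malliavin}) for the random field \(u_t(x)\), whereas you use the product rule \(\delta(Gv)=G\,\delta(v)-\langle DG,v\rangle_{L^2}\) directly; since \(u_t(x)\) is linear in \(x\) these are equivalent here, and your route is if anything slightly more elementary.
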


\noindent
The formula \eqref{score1} for the Skorokhod integral $\delta(u_k)$ consists of several components that capture the interaction between stochastic processes and their Malliavin derivatives. The first term $\left. \int_0^T u_t(x) \cdot dB_t \right|_{x=F_k}$ represents the It\^o integral from Bismut's formula, evaluated at the specific point $x = F_k$. This term captures the direct influence of the Brownian motion on the process. The remaining components in the expansion of $\sum_{j=1}^m \partial_j u_t(F_k)\,\cdot\,D_t F_k^j$ are subtracted from the first component and are indexed by the triplet $(j,p,q) \in \{1,\ldots,m\}^3$. This indexing arises from extracting matrix components of the inverse Malliavin matrix $\gamma_{X_T}^{-1}$ and selecting vector components of $Y_t^{-1}\,\sigma(t,X_t)$. The formula incorporates integrals over different time intervals, i.e., integrals over $s \in [0,t]$ and integrals over $s \in [t,T]$. These appear in the $D_t\gamma_{X_T}^{p,q}$ expansions and represent how the Malliavin derivative propagates through the system depending on whether the perturbation time $t$ occurs before or after the reference time $s$.

Note that Theorem \ref{thm:main} provides an explicit representation of \(\nabla_y \log p_t(y)\) that avoids abstract Malliavin derivatives, and therefore it is computationally practical for applications in score-based generative modelling.

\section{Malliavin calculus and covering vector fields}
\label{sec:malliavin}
Malliavin calculus enables us to analyse the regularity of the random process \(X_T\) and compute derivatives of its density. It also allows us to represent score functions via the Bismut-type formula\footnote{The negative sign arises from the integration-by-parts formula in Malliavin calculus, and the expectation is conditioned on \(X_T = y\), reflecting the evaluation of the density gradient at a specific point.}
\begin{equation}
\partial_{y_k} \log p_T(y) = -\E \left[ \delta(u_k) \mid X_T = y \right],\qquad k=1,\ldots, m
\end{equation}
where \(\delta(u_k)\) is the Skorokhod integral and \(u_k = \{ u_k(t) : 0 \leq t \leq T \}\), with \(u_k(t) \in \R^d\), 
is a process called the {\em covering vector field}, \cite{malliavin.thalmaier:06:stochastic, Nualart_Nualart_2018}. 
The covering vector field  \(u_k(t)\) must satisfy
\begin{equation}
\langle D X_T^i, u_k \rangle_H = \delta_{ik}, \quad i, k = 1, \ldots, m,
\label{cond}
\end{equation}
where  \(\delta_{ik} \) is the Kronecker delta, \(D X_T^i = \{ D_t X_T^i : 0 \leq t \leq T \}\) is the Malliavin derivative of \(X_T^i\), a process in the Hilbert space \(H = L^2\left([0, T], \R^d\right)\), 
\begin{equation*}
\langle f, g \rangle_H = \int_0^T f(t) \cdot g(t) \, dt
\end{equation*}
 is the inner product in \(H\), and \(\cdot\) denotes the standard dot product in \(\R^d\).
Condition \eqref{cond} ensures that \(u_k(t)\) ``covers'' the \(k\)-th direction 
in the Malliavin sense, which will allow us to isolate \(\partial_{y_k} p(y)\).
Define the Malliavin covariance matrix \(\gamma_{X_T} \in \R^{m \times m}\) with entries
\begin{equation}
\gamma_{X_T}^{i,j} = \langle D X_T^i, D X_T^j \rangle_H,
\label{MalliavinCov}
\end{equation}
where \(D_t X_T^i \in \R^d\) is the Malliavin derivative of \(X_T^i\) at time \(t\). This matrix measures the ``randomness'' induced by the Brownian motion up to time \(T\). We assume \(\gamma_{X_T}\) is invertible almost surely, so \(\gamma_{X_T}^{-1}\) exists, which 
is necessary for \(p(y)\) to be smooth.
%
%\subsection{Choice of Covering Vector Field}
%Assume that \(\gamma_{X_T}\) is the Malliavin covariance matrix with elements \(\gamma_{X_T}^{i,j} = \langle D X_T^i, D X_T^j \rangle_H\) for \(i, j = 1, \ldots, m\), and that \(\gamma_{X_T}\) is invertible almost surely. Then, 
%
Hereafter we show that for each \(k = 1, \ldots, m\), the process
\begin{equation}
u_k(t) = \sum_{j=1}^m \left( \gamma_{X_T}^{-1} \right)_{k,j} D_t X_T^j,
\label{covering}
\end{equation}
where \(\left( \gamma_{X_T}^{-1} \right)_{k,j}\) is the \((k, j)\)-th element of the inverse Malliavin matrix \(\gamma_{X_T}^{-1}\), satisfies the covering condition
\begin{equation*}
\langle D X_T^i, u_k \rangle_H = \delta_{ik} \quad \text{for all} \quad i = 1, \ldots, m.
\end{equation*}
\noindent
In the remainder of this section, we state and prove several important properties of the chosen covering vector field \eqref{covering}. Although some of these results may already be known or established in prior works, we include proofs here for the reader’s convenience and to ensure the exposition remains self-contained, for more details, see \cite{kunita1997stochastic, nualart2006malliavin}.

We begin with a theorem establishing the existence and uniqueness of a covering vector field \( u_k(t) \) for an \( \mathbb{R}^m \)-valued random variable \( X_T = (X_T^1, \ldots, X_T^m) \), defined on a probability space \( (\Omega, \mathcal{F}, \mathbb{P}) \) with a \( d \)-dimensional Brownian motion \( B=\{B_t\}_{t \in [0, T]} \). The theorem assumes that \( X_T \) lies in \( \mathbb{D}^{2,2} \), the space of twice Malliavin differentiable random variables relative to the Gaussian structure of \( B \), and that its Malliavin covariance matrix \( \gamma_{X_T} \), with entries \( \gamma_{X_T}^{i,j} = \langle D X_T^i, D X_T^j \rangle_H \) in the Cameron–Martin space \( H = L^2([0, T]; \mathbb{R}^d) \), is invertible almost surely and satisfies \( \mathbb{E}[\lambda_{\min}(\gamma_{X_T})^{-p}] < \infty \) for some \( p > 2(m+1) \). Under these conditions, for each \( k = 1, \ldots, m \), it can be shown that there exists a unique energy-minimising process \( u_k(t) \in L^2([0, T] \times \Omega; \mathbb{R}^d) \) in the domain of the Skorokhod integral \( \mathrm{Dom}(\delta) \), satisfying \( \langle D X_T^i, u_k \rangle_H = \delta_{ik} \), where \( \delta_{ik} \) is the Kronecker delta.

\begin{theorem}[Existence and uniqueness of the covering vector field]
Let $X_T = (X_T^1, \ldots, X_T^m)$ be an $\mathbb{R}^m$-valued random variable defined on a probability space $(\Omega, \mathcal{F}, \mathbb{P})$, equipped with a Brownian motion $B = \{B_t\}_{t \in [0, T]}$ taking values in $\mathbb{R}^d$. Suppose $X_T \in \mathbb{D}^{2,2}$. Let $\gamma_{X_T}$ be the Malliavin covariance matrix of $X_T$ defined in \eqref{MalliavinCov}. Assume $\gamma_{X_T}$ is invertible almost surely with $\mathbb{E}[\lambda_{\min}(\gamma_{X_T})^{-p}] < \infty$ for some $p > 2(m+1)$.

Then, for each $k = 1, \ldots, m$, there exists a unique process $u_k = \{u_k(t)\}_{t \in [0, T]}$ in $\text{Dom}(\delta) \cap L^2([0, T] \times \Omega, \mathbb{R}^d)$ minimising $\mathbb{E}[\int_0^T \|u(t)\|^2 \, dt]$ such that
\begin{equation}
\langle D X_T^i, u_k \rangle_H = \delta_{ik}, \quad i = 1, \ldots, m.
\label{cond0}
\end{equation}
\end{theorem}

\begin{proof}
Set $H = L^2([0, T]; \mathbb{R}^d)$, $\gamma = \gamma_{X_T}$, and define the covering field by
\[
u_k(t) = \sum_{j=1}^m \left(\gamma^{-1}\right)_{jk} D_t X_T^j.
\]

Define the operator $T: H \to \mathbb{R}^m$ by $Th = (\langle DX_T^i, h\rangle_H)_{i=1}^m$. Its adjoint satisfies $T^*\mathbf{c} = \sum_i c_i DX_T^i$, hence $TT^* = \gamma$. Since $TT^* = \gamma$, we have
\[
u_k = T^*\gamma^{-1}\mathbf{e}_k,
\]
which compresses verification and minimality into a standard Moore-Penrose argument.

Let $\mathbf{v}_k = (\left(\gamma^{-1}\right)_{1k}, \ldots, \left(\gamma^{-1}\right)_{mk})^\top$. Then
\begin{align}
\|u_k(t)\|^2 &= \sum_{j,\ell} \left(\gamma^{-1}\right)_{jk} \left(\gamma^{-1}\right)_{\ell k} (D_t X_T^j \cdot D_t X_T^\ell) \quad \text{(Euclidean inner product)},\\
\int_0^T \|u_k(t)\|^2 \, dt &= \mathbf{v}_k^\top \gamma \mathbf{v}_k.
\end{align}
Since $\gamma \mathbf{v}_k = \mathbf{e}_k$, we have $\mathbf{v}_k^\top \gamma \mathbf{v}_k = \left(\gamma^{-1}\right)_{kk}$. Thus
\[
\mathbb{E}\left[\int_0^T \|u_k(t)\|^2 \, dt\right] = \mathbb{E}[\left(\gamma^{-1}\right)_{kk}] \leq \mathbb{E}[\lambda_{\min}(\gamma)^{-1}].
\]
By H\"older’s inequality with conjugate exponents \(p\) and \({p}/{(p-1)}\),
\[
\mathbb{E}\!\left[\lambda_{\min}(\gamma)^{-1}\right]
\le \Big(\mathbb{E}\!\left[\lambda_{\min}(\gamma)^{-p}\right]\Big)^{1/p}
< \infty .
\]
Suppose $v_k$ also satisfies \eqref{cond0}. Then $w = u_k - v_k \in \ker T$. Since $u_k \in \text{range}(T^*)$ by construction and $\ker T = \text{range}(T^*)^\perp$, we have $\langle u_k, w \rangle_H = 0$. Therefore,
\[
\|v_k\|^2_{L^2} = \|u_k - w\|^2_{L^2} = \|u_k\|^2_{L^2} + \|w\|^2_{L^2} \geq \|u_k\|^2_{L^2},
\]
with equality only if $w = 0$. The minimality condition forces $w = 0$, establishing uniqueness.
Since $X_T \in \mathbb{D}^{2,2}$, each $\gamma^{ij} \in \mathbb{D}^{1,2}$. The cofactors of $\gamma$ inherit this regularity, and Cramer's formula gives
\[
\left(\gamma^{-1}\right)_{jk} = \frac{\text{adj}(\gamma)_{jk}}{\det \gamma}.
\]
The moment bound $\mathbb{E}[\lambda_{\min}(\gamma)^{-p}] < \infty$ with $p > 2(m+1)$ ensures the required integrability via the Malliavin chain rule after localisation (truncate $\det \gamma$ away from 0 and pass to the limit). Since the inverse matrix entries satisfy $\left(\gamma^{-1}\right)_{jk} \in \mathbb{D}^{1,2}$ and $D_t X_T^j \in \mathbb{D}^{1,2}(H)$, the product rule yields $u_k \in \mathbb{D}^{1,2}(H) \subset \text{Dom}(\delta)$.

\end{proof}

\noindent 
The next theorem concerns the regularity of the covering vector field \(u_k(t)\) for a stochastic process \(\bigl(X_t\bigr)_{t\in[0,T]}\) adapted to a Brownian filtration.  
It assumes that the terminal value \(X_T\) lies in \(\mathbb{D}^{2,2}\), that the Malliavin covariance matrix \(\gamma_{X_T}\) is almost surely invertible, and it defines \(u_k(t)\) as in~\eqref{covering}.  
Additional regularity conditions are: uniform ellipticity of \(\gamma_{X_T}\) (i.e.\ \(\gamma_{X_T}\ge\lambda I\) for some \(\lambda>0\)); and, when \(X_t\) solves the SDE \(\mathrm dX_t = b(t,X_t)\,\mathrm dt + \sigma(t,X_t)\,\mathrm dB_t\), the coefficients \(b\) and \(\sigma\) are \(\mathcal{C}^2\) with all derivatives up to second order bounded.  
Furthermore, under the uniform ellipticity bound one automatically has \(\mathbb{E}\bigl[\|\gamma_{X_T}^{-1}\|^{\,q}\bigr]<\infty\) for every \(q>0\).  
Under these hypotheses the theorem shows that each \(u_k(t)\) belongs to \(\Dom(\delta)\) and that its Skorokhod integral satisfies \(\delta(u_k)\in L^{p}(\Omega)\) for some \(p>1\), thereby confirming the desired regularity of the vector field.

\begin{theorem}[Regularity of the covering vector field]
Let \(X = (X_t)_{t\in[0,T]}\) be the unique solution of
\[
  \mathrm dX_t = b(t,X_t)\,\mathrm dt + \sigma(t,X_t)\,\mathrm dB_t,
\]
where \(b,\sigma\in \mathcal{C}^2\) with all mixed derivatives up to order two bounded.  Suppose further that
\begin{enumerate}[label=(\roman*)]
  \item \(X_T\in\mathbb D^{2,2}\), and its Malliavin covariance matrix \eqref{MalliavinCov}
    is a.s.\ invertible with
    \(\gamma_{X_T}\ge\lambda I_m\) for some \(\lambda>0\).
  \item Define, for each \(k=1,\dots,m\), the covering vector field defined at \eqref{covering}
\end{enumerate}
Then each \(u_k\) belongs to \(\Dom(\delta)\), and its Skorokhod integral
\(\delta(u_k)\) is well-defined with \(\delta(u_k)\in L^p(\Omega)\) for some \(p>1\).
\end{theorem}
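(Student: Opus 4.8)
The plan is to reduce everything to the single claim that $u_k\in\mathbb{D}^{1,2}(H)$, since this inclusion is well known to imply $u_k\in\Dom(\delta)$ together with the Meyer-type estimate $\E[\delta(u_k)^2]\le \E[\|u_k\|_H^2]+\E[\|Du_k\|_{H\otimes H}^2]<\infty$, whence $\delta(u_k)\in L^2(\Omega)\subset L^p(\Omega)$ with $p=2>1$. Before that, I would record what uniform ellipticity buys us: from $\gamma_{X_T}\ge\lambda I_m$ one gets $\lambda_{\min}(\gamma_{X_T})\ge\lambda$ and $\det\gamma_{X_T}\ge\lambda^m$ almost surely, hence the \emph{deterministic} bound $\|\gamma_{X_T}^{-1}\|_{\mathrm{op}}\le 1/\lambda$. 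In particular $\E[\|\gamma_{X_T}^{-1}\|^q]<\infty$ for every $q>0$ with no extra hypothesis, so the negative-moment assumption of the previous theorem is met with room to spare; here, though, it is cleaner to argue the $\mathbb{D}^{1,2}(H)$-membership directly, keeping track of integrability exponents.

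The core step is to upgrade all regularity statements to ``$\mathbb{D}^{1,p}$ for every $p\ge1$''. Since $b,\sigma\in\mathcal C^2$ with bounded derivatives up to order two, the standard Malliavin theory for SDEs \citep{nualart2006malliavin} gives $X_T\in\mathbb{D}^{2,p}$ for every $p\ge1$, with $D_rX_T=Y_TY_r^{-1}\sigma(r,X_r)\mathbf{1}_{[0,T]}(r)$ and a second derivative $D_sD_rX_T$ expressible through the first and second variation processes $Y$ and $Z$; these solve linear SDEs with bounded coefficients and therefore have moments of all orders, so $D_rX_T^j\in\mathbb{D}^{1,p}(H)$ for every $p\ge1$. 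By Hölder's inequality the entries $\gamma_{X_T}^{i,j}=\langle DX_T^i,DX_T^j\rangle_H$ then lie in $\mathbb{D}^{1,p}$ for every $p\ge1$. Because $\det\gamma_{X_T}\ge\lambda^m>0$, the map $A\mapsto A^{-1}$ is smooth with locally bounded derivatives on $\{A:A\ge\lambda I_m\}$, so the Malliavin chain rule applies entrywise and yields $(\gamma_{X_T}^{-1})_{jk}\in\mathbb{D}^{1,p}$ for every $p\ge1$, with
\[
D_s(\gamma_{X_T}^{-1})_{jk}=-\sum_{a,b=1}^m(\gamma_{X_T}^{-1})_{ja}\,\bigl(D_s\gamma_{X_T}^{a,b}\bigr)\,(\gamma_{X_T}^{-1})_{bk},
\]
the right-hand side being $p$-integrable for all $p$ thanks to the deterministic bound on $\gamma_{X_T}^{-1}$ and the $\mathbb{D}^{1,p}$-regularity of $\gamma_{X_T}^{a,b}$. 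Finally, writing $u_k(t)=\sum_{j=1}^m(\gamma_{X_T}^{-1})_{jk}\,D_tX_T^j$ as a finite sum of products of a random variable in $\bigcap_p\mathbb{D}^{1,p}$ with an $H$-valued process in $\bigcap_p\mathbb{D}^{1,p}(H)$, the product rule $D_s(FU)=(D_sF)U+F\,D_sU$ together with Hölder shows $u_k\in\mathbb{D}^{1,p}(H)$ for every $p\ge1$; in particular $u_k\in\mathbb{D}^{1,2}(H)$, so $u_k\in\Dom(\delta)$ and $\delta(u_k)\in L^2(\Omega)$, which proves the statement with $p=2$.

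The only real obstacle is integrability bookkeeping rather than anything deep: products of merely $\mathbb{D}^{1,2}$ random variables need not remain in $\mathbb{D}^{1,2}$, so one must carry the stronger ``$\mathbb{D}^{1,p}$ for all finite $p$'' property through every product, every application of Cramer's rule, and the chain rule for the matrix inverse. This is precisely why the $\mathcal C^2$-with-bounded-derivatives hypothesis on $(b,\sigma)$ (which forces $X_T\in\mathbb{D}^{2,p}$ for all $p$, with all moments of the variation processes) and the uniform ellipticity bound (which makes $\gamma_{X_T}^{-1}$ almost surely bounded, so no negative moments have to be estimated) are exactly the right assumptions. As a byproduct, applying Meyer's inequalities to $u_k\in\mathbb{D}^{1,p}(H)$ would give the stronger conclusion $\delta(u_k)\in L^p(\Omega)$ for every $p>1$, but for the stated claim the case $p=2$ already suffices.
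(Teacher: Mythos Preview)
Your proposal is correct and follows the same skeleton as the paper: reduce to showing $u_k\in\mathbb{D}^{1,2}(H)$, exploit the deterministic bound $\|\gamma_{X_T}^{-1}\|\le\lambda^{-1}$ from uniform ellipticity, apply the product rule and the matrix-inverse derivative formula, and conclude $\delta(u_k)\in L^2(\Omega)$. The only noteworthy difference is in how the integrability is handled. The paper works directly at the $L^2$ level from the abstract hypothesis $X_T\in\mathbb{D}^{2,2}$, writing out explicit Cauchy--Schwarz bounds for $\E\int|u_k|^2$ and $\E\iint|D_su_k|^2$ separately. You instead invoke the SDE structure (which the $\mathcal C^2$-bounded-derivatives assumption makes available) to upgrade everything to $\mathbb{D}^{1,p}$ for all finite $p$ up front, so that each product step is handled by H\"older with exponents to spare. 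Your route is cleaner for the bookkeeping and yields the bonus $\delta(u_k)\in\bigcap_{p>1}L^p(\Omega)$ via Meyer's inequalities; the paper's route has the virtue of using only the stated $\mathbb{D}^{2,2}$ hypothesis rather than the stronger SDE regularity, at the cost of more hands-on estimates.
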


\begin{proof}
The Skorokhod integral \(\delta\) is the adjoint of the Malliavin derivative operator. A sufficient condition for \( u \in \text{Dom}(\delta) \) is that \( u \in \mathbb{D}^{1,2}(H) \) with
\[
\| u \|_{\mathbb{D}^{1,2}(H)}^2 = \mathbb{E} \left[ \int_0^T |u(t)|^2 \, dt \right] + \mathbb{E} \left[ \int_0^T \int_0^T |D_s u(t)|^2 \, ds \, dt \right] < \infty.
\]
We verify that \( u_k \in \mathbb{D}^{1,2}(H) \). The Malliavin derivative of \( u_k(t) \) is
\[
D_s u_k(t) = \sum_{j=1}^m \left[ D_s \left( (\gamma_{X_T}^{-1})_{j k} \right) D_t X_T^j + (\gamma_{X_T}^{-1})_{j k} D_s D_t X_T^j \right],
\]
where
\[
D_s (\gamma_{X_T}^{-1})_{j k} = - \sum_{p, q=1}^m (\gamma_{X_T}^{-1})_{j p} (D_s (\gamma_{X_T})_{p q}) (\gamma_{X_T}^{-1})_{q k},
\]
and
\[
D_s (\gamma_{X_T})_{p q} = \int_0^T \left[ D_s D_t X_T^p \cdot D_t X_T^q + D_t X_T^p \cdot D_s D_t X_T^q \right] dt.
\]
Since \( X_T \in \mathbb{D}^{2,2} \) and \( \|\gamma_{X_T}^{-1}\| \leq \lambda^{-1} \), we have \( D_s (\gamma_{X_T}^{-1})_{j k} \in L^2(\Omega \times [0, T]) \).
We need to verify the following two conditions
\begin{enumerate}
    \item[a)] \( \mathbb{E} \left[ \int_0^T |u_k(t)|^2 \, dt \right] < \infty \),
    \item[b)] \( \mathbb{E} \left[ \int_0^T \int_0^T |D_s u_k(t)|^2 \, ds \, dt \right] < \infty \).
\end{enumerate}
For the first condition
\[
\int_0^T |u_k(t)|^2 \, dt \leq \| \gamma_{X_T}^{-1} \|^2 \sum_{j=1}^m \int_0^T |D_t X_T^j|^2 \, dt = \left\| \gamma_{X_T}^{-1} \right\|^2 \operatorname{Tr}\left( \gamma_{X_T}\right).
\]
Since \( \| \gamma_{X_T}^{-1} \| \leq \lambda^{-1} \) and \( \mathbb{E} \left[ \operatorname{Tr} \gamma_{X_T} \right] < \infty \) (as \( X_T \in \mathbb{D}^{1,2} \)), we obtain
\[
\mathbb{E} \left[ \int_0^T |u_k(t)|^2 \, dt \right] \leq \lambda^{-2} \mathbb{E} \left[ \operatorname{Tr} \gamma_{X_T} \right] < \infty.
\]
For the second condition
\begin{align}
|D_s u_k(t)|^2 \leq 2 \left| \sum_{j=1}^m D_s (\gamma_{X_T}^{-1})_{j k} D_t X_T^j \right|^2 + 2 \left| \sum_{j=1}^m (\gamma_{X_T}^{-1})_{j k} D_s D_t X_T^j \right|^2.
\end{align}
Using Cauchy-Schwarz and integrating
\begin{align*}
\mathbb{E} \left[ \int_0^T \int_0^T |D_s u_k(t)|^2 \, ds \, dt \right]&\leq 2m \sum_{j=1}^m \mathbb{E} \bigg[ \int_0^T \int_0^T \Big( |D_s (\gamma_{X_T}^{-1})_{j k}|^2 |D_t X_T^j|^2 \\
&\quad + |(\gamma_{X_T}^{-1})_{j k}|^2 |D_s D_t X_T^j|^2 \Big) ds \, dt \bigg].
\end{align*}
Since \( |D_s (\gamma_{X_T}^{-1})_{j k}|^2 \leq C \|D_s \gamma_{X_T}\|^2 \in L^2(\Omega \times [0,T]) \), \( |D_t X_T^j|^2 \in L^2(\Omega \times [0,T]) \), and \( |(\gamma_{X_T}^{-1})_{j k}|^2 \leq \lambda^{-2} \), while \( X_T \in \mathbb{D}^{2,2} \) ensures \( \mathbb{E} \left[ \int_0^T \int_0^T |D_s D_t X_T^j|^2 \, ds \, dt \right] < \infty \), the second condition holds.
Therefore, \( u_k \in \mathbb{D}^{1,2}(H) \subseteq \text{Dom}(\delta) \) with \( \delta(u_k) \in L^2(\Omega) \).
For \( p = 2 \), we verify \( u_k \in \mathbb{D}^{1,2}(H) \) directly
\[
\mathbb{E}\left[ \int_0^T |u_k(t)|^2 dt \right] \leq \lambda^{-2} \mathbb{E}\left[ \operatorname{Tr}(\gamma_{X_T}) \right] < \infty,
\]
and similarly for the Malliavin derivative term. Since the Skorokhod integral \(\delta: \mathbb{D}^{1,2}(H) \to L^2(\Omega)\) is continuous, we have \(\delta(u_k) \in L^2(\Omega) \subseteq L^p(\Omega)\) for \(p = 2 > 1\).

\end{proof}

\noindent
In the next theorem we establish the \( L^2 \)-continuity of the covering vector field associated with a stochastic process \( (X_t)_{t \in [0, T]} \), adapted to a Brownian filtration. The process is assumed to have a terminal value \( X_T \) in the Malliavin space \( \mathbb{D}^{2,2} \), with an invertible Malliavin covariance matrix \( \gamma_{X_T} \). The covering vector field \( u_k(t) \) is defined using \( \gamma_{X_T}^{-1} \) and the Malliavin derivatives of \( X_T \). Additionally, \( X_t \) satisfies an SDE with \( \mathcal{C}^2 \) coefficients \( b \) and \( \sigma \) having bounded derivatives up to second order. Under uniform ellipticity of \( \gamma_{X_T} \), the theorem proves that \( t \mapsto u_k(t) \) is continuous into \( L^2(\Omega) \), highlighting the regularity of the vector field in the stochastic setting.

%DA QUI

\begin{theorem}[{$L^2$}–continuity of the covering vector field]
Let $(X_t)_{t\in[0,T]}$ be the unique solution of
\[
  \mathrm dX_t \;=\; b(t,X_t)\,\mathrm dt \;+\;\sigma(t,X_t)\,\mathrm dB_t,
  \quad X_0=x_0,
\]
where $b,\sigma\in \mathcal{C}^2\bigl([0,T]\times\mathbb{R}^m\bigr)$ have all first and second derivatives bounded, and
$\sigma$ satisfies the linear growth bound $\|\sigma(t,x)\|\le C(1+\|x\|)$.  
Suppose additionally that
\begin{enumerate}[label=(\roman*)]
  \item $X_T\in\mathbb{D}^{2,2}$ and its Malliavin covariance \eqref{MalliavinCov}
    is a.s.\ invertible with
    $\gamma_{X_T}\ge\lambda I_m$ for some $\lambda>0$;
  \item for each $t \in [0,T]$, the mapping $x \mapsto X_t^x$ is a random diffeomorphism of class $\mathcal{C}^2$ almost surely, and for some $p \geq 8$,
    \[
    \mathbb{E}\left[\sup_{u\in[0,T]} \|Y_u\|^p + \sup_{u\in[0,T]} \|Y_u^{-1}\|^p\right] < \infty.
    \]
\end{enumerate}
Define for each $k=1,\dots,m$ the covering field \eqref{covering}.
Then $t\mapsto u_k(t)$ is continuous as a map
$[0,T]\to L^2(\Omega)$; equivalently
\[
  \lim_{s\to t}\mathbb{E}\bigl[|u_k(s)-u_k(t)|^2\bigr]\;=\;0
  \quad\forall\,t\in[0,T].
\]
\end{theorem}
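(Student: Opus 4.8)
The plan is to exploit the fact that the inverse Malliavin matrix $\gamma_{X_T}^{-1}$ carries no dependence on the running time variable, so the only $t$-dependence of $u_k(t)$ enters through the Malliavin derivative $D_tX_T$. Writing
\[
  u_k(s)-u_k(t)\;=\;\sum_{j=1}^m(\gamma_{X_T}^{-1})_{jk}\bigl(D_sX_T^j-D_tX_T^j\bigr),
\]
and using the uniform ellipticity bound $\|\gamma_{X_T}^{-1}\|_{\mathrm{op}}\le\lambda^{-1}$ together with the elementary inequality for sums, one gets $\E\bigl[|u_k(s)-u_k(t)|^2\bigr]\le m\lambda^{-2}\sum_{j=1}^m\E\bigl[|D_sX_T^j-D_tX_T^j|^2\bigr]$. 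Thus the first step is to reduce the claim to the $L^2(\Omega)$-continuity of the map $s\mapsto D_sX_T$ on $[0,T]$.

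Second, I would invoke the classical closed-form representation of the Malliavin derivative of an SDE solution (the same one that underlies the Malliavin covariance formula in Theorem~\ref{thm:main}): for $0\le s\le T$,
\[
  D_sX_T\;=\;Y_T\,Y_s^{-1}\,\sigma(s,X_s),
\]
where $Y$ is the first variation process and $Y^{-1}$ its pathwise inverse, itself the solution of a linear SDE with coefficients built from $\partial_xb(t,X_t)$ and $\partial_x\sigma^l(t,X_t)$. Splitting the increment,
\[
  D_sX_T-D_tX_T\;=\;Y_T\bigl(Y_s^{-1}-Y_t^{-1}\bigr)\sigma(s,X_s)\;+\;Y_T\,Y_t^{-1}\bigl(\sigma(s,X_s)-\sigma(t,X_t)\bigr),
\]
so it suffices to control each of these two terms in $L^2(\Omega)$ as $s\to t$.

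Third, I would conclude by dominated convergence. Pathwise, $s\mapsto X_s$, $s\mapsto Y_s$ and $s\mapsto Y_s^{-1}$ all have a.s.\ continuous trajectories (each solves an SDE with $\mathcal{C}^1$-in-space coefficients), and $\sigma\in\mathcal{C}^2$ is continuous, so $D_sX_T\to D_tX_T$ almost surely. For the dominating random variable, the standard moment estimates for SDEs with bounded first derivatives of $b$ and $\sigma$ give $\E\bigl[\sup_{r\in[0,T]}\|X_r\|^p\bigr]<\infty$, $\E\bigl[\sup_r\|Y_r\|^p\bigr]<\infty$ and $\E\bigl[\sup_r\|Y_r^{-1}\|^p\bigr]<\infty$ for every $p\ge1$; together with the linear-growth bound $\|\sigma(t,x)\|\le C(1+\|x\|)$ these yield
\[
  \sup_{s\in[0,T]}|D_sX_T|^2\;\le\;C\,\|Y_T\|^2\bigl(\sup_r\|Y_r^{-1}\|\bigr)^2\bigl(1+\sup_r\|X_r\|\bigr)^2\;=:\;G\in L^1(\Omega),
\]
so that $|D_sX_T-D_tX_T|^2\le 4G$ uniformly in $s,t$; dominated convergence then finishes the reduction from the first step. (Alternatively one can bypass domination and estimate the two terms directly in $L^2$ via H\"older together with the increment bounds $\E[\|X_s-X_t\|^{2q}]\le C|s-t|^q$, $\E[\|Y_s^{-1}-Y_t^{-1}\|^{2q}]\le C|s-t|^q$ and the joint continuity of $\sigma$.)

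The main obstacle is supplying the legitimate dominating function, i.e.\ the $L^p$ moment bounds on $\sup_r\|Y_r^{-1}\|$ for all $p\ge1$: the inverse first variation process solves
\[
  dY_t^{-1}=-Y_t^{-1}\partial_xb(t,X_t)\,dt-\sum_{l=1}^dY_t^{-1}\partial_x\sigma^l(t,X_t)\,dB_t^l+\sum_{l=1}^dY_t^{-1}\bigl(\partial_x\sigma^l(t,X_t)\bigr)^2\,dt,
\]
and one must verify that the Burkholder--Davis--Gundy and Gr\"onwall machinery really delivers finite moments of every order under the stated boundedness of the derivatives of $b$ and $\sigma$. Since hypothesis~(ii) only asserts $L^2(\Omega)$-boundedness of the derivatives of the flow, this estimate has to be run from the SDEs for $Y$ and $Y^{-1}$ rather than quoted verbatim; everything else is routine.
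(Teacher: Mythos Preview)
Your proposal is correct and follows essentially the same route as the paper: reduce to $L^2$-continuity of $s\mapsto D_sX_T$ via the $t$-independence and uniform bound $\|\gamma_{X_T}^{-1}\|\le\lambda^{-1}$, invoke the representation $D_sX_T=Y_TY_s^{-1}\sigma(s,X_s)$, and conclude by pathwise continuity plus dominated convergence with a dominating variable built from $\sup_r\|X_r\|$, $\sup_r\|Y_r^{-1}\|$ and $\|Y_T\|$. The paper writes the increment as $Y_T\bigl(Y_s^{-1}\sigma(s,X_s)-Y_t^{-1}\sigma(t,X_t)\bigr)$ rather than splitting it into your two pieces, but this is cosmetic; your identification of the $L^p$ moment bound on $\sup_r\|Y_r^{-1}\|$ as the one point requiring care is apt, and the paper likewise appeals to ``standard SDE moment estimates'' at exactly this step.
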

\begin{proof}
Since $b,\sigma\in\mathcal{C}^2$ with bounded derivatives, $X_t\in\mathbb{D}^{2,2}$. The Malliavin derivative $D_tX_T$ is given by
\[
D_t X_T = Y_T Y_t^{-1} \sigma(t, X_t) \mathbf{1}_{\{t \leq T\}},
\]
where $Y_t = \partial X_t/\partial x$ is the first variation process satisfying
\[
dY_t = \partial_x b(t, X_t) Y_t \, dt + \sum_{l=1}^d \partial_x \sigma_l(t, X_t) Y_t \, dB_t^l, \quad Y_0 = I_m.
\]
The process $Y_t$ is continuous and invertible a.s.\ with $\mathbb{E}[\|Y_T\|^2] < \infty$.
Note explicitly that $t \mapsto Y_t^{-1}$ is a.s.\ continuous since the inverse map is continuous on $\mathrm{GL}_m$ and $t \mapsto Y_t$ is a.s.\ continuous.
Also, under our global Lipschitz/linear-growth hypotheses,
\[
\mathbb{E}\left[\sup_{u\in[0,T]} \|X_u\|^p\right] < \infty \quad \text{for some } p \geq 8.
\]
For $s,t \leq T$, we have
\[
D_s X_T - D_t X_T = Y_T \left( Y_s^{-1} \sigma(s, X_s) - Y_t^{-1} \sigma(t, X_t) \right).
\]
Since $Y_t^{-1}\sigma(t,X_t)$ is continuous in $t$ (by continuity of $X_t$, $Y_t$, and $\mathcal{C}^2$-regularity of $\sigma$), and
\begin{align*}
\|Y_s^{-1} \sigma(s, X_s)\| &\leq C \left(1 + \sup_{u\leq T} \|X_u\|\right) \sup_{u\leq T} \|Y_u^{-1}\|,\\
|D_s X_T - D_t X_T| &\leq \|Y_T\| \|Y_s^{-1} \sigma(s, X_s) - Y_t^{-1} \sigma(t, X_t)\|\\
&\leq 2C \|Y_T\| \left(1 + \sup_{u\leq T} \|X_u\|\right) \sup_{u\leq T} \|Y_u^{-1}\|,
\end{align*}
hence
\begin{align*}
|D_s X_T - D_t X_T|^2 \leq C' \|Y_T\|^2 \left(1 + \sup_{u\leq T} \|X_u\|\right)^2 \left(\sup_{u\leq T} \|Y_u^{-1}\|\right)^2.
\end{align*}
By (ii) with $p \geq 8$ and the standard moment bound $\mathbb{E}[\sup_{u\leq T} \|X_u\|^p] < \infty$, Hölder with exponents $(2,4,4)$ gives the RHS integrable, so dominated convergence theorem applies to yield $\mathbb{E}[|D_s X_T - D_t X_T|^2] \to 0$. 
By assumption, $\gamma_{X_T} \geq \lambda I_m$ a.s., so $\|\gamma_{X_T}^{-1}\| \leq \lambda^{-1}$. For the covering field,
\[
u_k(s) - u_k(t) = \sum_{j=1}^m (\gamma_{X_T}^{-1})_{jk} (D_s X_T^j - D_t X_T^j),
\]
hence by Cauchy-Schwarz,
\begin{align*}
\mathbb{E} \left[ |u_k(s) - u_k(t)|^2 \right] &\leq \mathbb{E} \left[ \|\gamma_{X_T}^{-1} e_k\|^2 \sum_{j=1}^m |D_s X_T^j - D_t X_T^j|^2 \right] \\
&\leq \lambda^{-2} \sum_{j=1}^m \mathbb{E} \left[ |D_s X_T^j - D_t X_T^j|^2 \right].
\end{align*}
Since each term converges to zero as $s \to t$, we obtain $\lim_{s\to t}\mathbb{E}[|u_k(s)-u_k(t)|^2] = 0$.

\end{proof}

Next we present a stability theorem for covering vector fields associated with stochastic processes under controlled perturbations. The theorem examines a sequence of stochastic processes \( (X_t^n)_{t \in [0, T]} \) and a limiting process \( (X_t)_{t \in [0, T]} \), described by stochastic differential equations driven by the same Brownian motion, with coefficients \( b_n, \sigma_n \) and \( b, \sigma \), respectively, where these coefficients satisfy smoothness and boundedness conditions (they are \( \mathcal{C}^2 \) with first and second derivatives bounded uniformly in \(n\)) and converge locally uniformly together with their spatial derivatives (i.e., \(C^1_{\mathrm{loc}}\)-convergence). Both processes are adapted to this Brownian filtration, their terminal values \(X_T^n\) and \(X_T\) lie in the Malliavin Sobolev space \( \mathbb{D}^{1,2} \), and their Malliavin covariance matrices are almost surely invertible with a uniform ellipticity bound; moreover, \(\sup_n \mathbb{E}\!\left[\|(\gamma^n)^{-1}\|^2\right]<\infty\) and \(\mathbb{E}\!\left[\|\gamma^{-1}\|^2\right]<\infty\). We also assume uniform moment bounds for the first-variation processes \(Y_t^n=\partial_x X_t^n\) and \((Y_t^n)^{-1}\), and uniform sup-in-time \(L^2\) bounds for \(X^n\) and \(X\). In addition, we impose a uniform \(L^4\) Malliavin control, i.e., \(\sup_n \mathbb{E}\!\int_0^T \|D_t X_T^n\|^4\,dt < \infty\) and \(\mathbb{E}\!\int_0^T \|D_t X_T\|^4\,dt < \infty\) (equivalently, \(X_T^n,X_T\in\mathbb{D}^{1,4}\)). The theorem proves that the covering vector fields \( u_k^n(t) \), defined using the inverse Malliavin covariance matrices and Malliavin derivatives, converge to \( u_k(t) \) in the \( L^2([0, T] \times \Omega) \) norm as \( n \to \infty \). This result highlights the robustness of the covering vector field construction against perturbations in the SDE dynamics.

% Defining the theorem with all conditions and clarifications
\begin{theorem}[Stability under perturbations]
Let \(X^n\) and \(X\) be the unique strong solutions on \([0,T]\) of
\[
  \mathrm dX^n_t = b_n(t,X^n_t)\,\mathrm dt + \sigma_n(t,X^n_t)\,\mathrm dB_t,
  \qquad
  \mathrm dX_t   = b(t,X_t)\,\mathrm dt + \sigma(t,X_t)\,\mathrm dB_t,
\]
driven by the same \(d\)--dimensional Brownian motion \(B\).  Assume:
\begin{enumerate}[label=(\roman*)]
  \item We have $b_n, b, \sigma_n, \sigma \in C^2([0,T] \times \mathbb{R}^m)$ with all first and second derivatives 
    bounded uniformly in $n$, and $(b_n, \sigma_n) \to (b, \sigma)$ locally uniformly, and 
    $\partial_x b_n \to \partial_x b$, $\partial_x \sigma_n \to \partial_x \sigma$ locally uniformly 
    (i.e., $C^1_{\mathrm{loc}}$-convergence).
    
  \item For each $n$ and for $X$, the terminal value lies in $\mathbb{D}^{1,2}$, and their Malliavin 
    covariance matrices are given by
    $\gamma^n = \int_0^T D_t X_T^n (D_t X_T^n)^\top \, dt$ and 
    $\gamma = \int_0^T D_t X_T (D_t X_T)^\top \, dt$,
    which satisfy $\gamma^n, \gamma \geq \lambda I_m > 0$ a.s.\ for some $\lambda > 0$, together with
    $\sup_n \mathbb{E}[\|(\gamma^n)^{-1}\|^2] < \infty$ and $\mathbb{E}[\|\gamma^{-1}\|^2] < \infty$.
    
  \item The first variation processes $Y_t^n = \partial_x X_t^n$ satisfy the bounds
    $\sup_{n,t} \mathbb{E}[\|Y_t^n\|^2 + \|(Y_t^n)^{-1}\|^2] < \infty$, and we also have
    $\sup_n \mathbb{E}\big[\sup_{t \in [0,T]} \|X_t^n\|^2\big] < \infty$ and 
    $\mathbb{E}\big[\sup_{t \in [0,T]} \|X_t\|^2\big] < \infty$.
    
  \item \textit{(Uniform $L^4$ Malliavin control)} We require that
    $\sup_n \mathbb{E} \int_0^T \|D_t X_T^n\|^4 \, dt < \infty$ and 
    $\mathbb{E} \int_0^T \|D_t X_T\|^4 \, dt < \infty$.
\end{enumerate}
Define, for each \(k=1,\dots,m\),
\[
  u^n_k(t)
  \;=\;\sum_{j=1}^m(\gamma^n)^{-1}_{jk}\,D_tX_T^{n,j},
  \quad
  u_k(t)
  \;=\;\sum_{j=1}^m\gamma^{-1}_{jk}\,D_tX_T^j.
\]
Then for every \(k\),
\[
  \lim_{n\to\infty}
  \mathbb{E}\!\Bigl[\!\int_0^T\bigl|u^n_k(t)-u_k(t)\bigr|^2\,dt\Bigr]
  \;=\;0.
\]
\end{theorem}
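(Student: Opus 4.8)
The plan is to reduce the statement to the strong convergence of the Malliavin derivatives in $L^2([0,T]\times\Omega)$ and then propagate it through the inverse covariance matrices. Using the representation $D_tX_T^n=Y_T^n(Y_t^n)^{-1}\sigma_n(t,X_t^n)\mathbf{1}_{\{t\le T\}}$ (and the analogous formula without the superscript), I would first establish $\mathbb{E}[\sup_{t\le T}\|X_t^n-X_t\|^2]\to0$ and $\mathbb{E}[\sup_{t\le T}\|Y_t^n-Y_t\|^2]\to0$ by the standard Gronwall/Burkholder--Davis--Gundy stability estimates for SDEs with uniformly Lipschitz coefficients; the local uniform convergence $(b_n,\sigma_n)\to(b,\sigma)$ enters here, together with a localization argument to pass from local-uniform coefficient convergence to genuine $L^2$ convergence, and the uniform boundedness of the second derivatives yields (via an Arzel\`a--Ascoli argument) the corresponding local uniform convergence of the Jacobians $\partial_xb_n\to\partial_xb$, $\partial_x\sigma_n^l\to\partial_x\sigma^l$ that drives the linear SDE satisfied by $Y_t^n-Y_t$. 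The uniform-in-$n$ moment bounds in hypothesis (iii), which under the stated smoothness in fact hold for all orders and with the supremum in $t$ inside the expectation, supply the uniform integrability needed throughout and also give $\mathbb{E}[\sup_{t\le T}\|(Y_t^n)^{-1}-(Y_t)^{-1}\|^2]\to0$ since matrix inversion is locally Lipschitz. Expanding $D_tX_T^n-D_tX_T$ as a telescoping sum of products, each containing exactly one factor that tends to $0$ in $L^2(\Omega)$ uniformly in $t$ and with all remaining factors bounded in every $L^p(\Omega)$, Hölder's inequality then gives $\mathbb{E}\bigl[\int_0^T|D_tX_T^n-D_tX_T|^2\,dt\bigr]\to0$; the term $\sigma_n(t,X_t^n)-\sigma(t,X_t)$ is split as $(\sigma_n-\sigma)(t,X_t^n)+(\sigma(t,X_t^n)-\sigma(t,X_t))$ and handled with the local uniform convergence and the Lipschitz bound on $\sigma$.

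Next I would transfer this convergence to the covariance matrices. Since $\gamma^n=\int_0^T D_tX_T^n(D_tX_T^n)^\top\,dt$, the algebraic identity $D_tX_T^n(D_tX_T^n)^\top-D_tX_T(D_tX_T)^\top=(D_tX_T^n-D_tX_T)(D_tX_T^n)^\top+D_tX_T(D_tX_T^n-D_tX_T)^\top$ together with Cauchy--Schwarz in $(t,\omega)$ gives $\mathbb{E}[\|\gamma^n-\gamma\|^2]\to0$, using the uniform fourth-moment bound $\sup_n\mathbb{E}[\int_0^T|D_tX_T^n|^4\,dt]<\infty$. The resolvent identity $(\gamma^n)^{-1}-\gamma^{-1}=(\gamma^n)^{-1}(\gamma-\gamma^n)\gamma^{-1}$ and the uniform ellipticity $\|(\gamma^n)^{-1}\|\vee\|\gamma^{-1}\|\le\lambda^{-1}$ then yield $\|(\gamma^n)^{-1}-\gamma^{-1}\|\le\lambda^{-2}\|\gamma^n-\gamma\|$, hence $L^2(\Omega)$ convergence of the inverses; since this difference is also bounded by $2\lambda^{-1}$ almost surely, bounded convergence upgrades it to convergence in $L^p(\Omega)$ for every finite $p$.

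Finally, decomposing
\[
u^n_k(t)-u_k(t)=\sum_{j=1}^m\bigl((\gamma^n)^{-1}_{jk}-\gamma^{-1}_{jk}\bigr)D_tX_T^{n,j}+\sum_{j=1}^m\gamma^{-1}_{jk}\bigl(D_tX_T^{n,j}-D_tX_T^j\bigr),
\]
one bounds $\mathbb{E}[\int_0^T|u^n_k(t)-u_k(t)|^2\,dt]$ by $2\,\mathbb{E}[\|(\gamma^n)^{-1}-\gamma^{-1}\|^2\operatorname{Tr}\gamma^n]+2\lambda^{-2}\mathbb{E}[\int_0^T|D_tX_T^n-D_tX_T|^2\,dt]$; the second term vanishes by the first paragraph, and for the first I would use Cauchy--Schwarz to get $2(\mathbb{E}[\|(\gamma^n)^{-1}-\gamma^{-1}\|^4])^{1/2}(\mathbb{E}[(\operatorname{Tr}\gamma^n)^2])^{1/2}$, where the first factor tends to $0$ by the $L^4$ convergence of the inverses and the second is bounded uniformly in $n$ (or, avoiding the fourth-moment bound on the inverse, one splits $\operatorname{Tr}\gamma^n$ according to $\{\operatorname{Tr}\gamma^n\le M\}$ and uses uniform integrability). \textbf{The main obstacle} is exactly this product term together with the step preceding it: because $\gamma^n$ depends \emph{quadratically} on $D X_T^n$ integrated over the whole interval, the $L^2([0,T]\times\Omega)$ convergence of the derivatives must first be upgraded to $L^2(\Omega)$ convergence of $\gamma^n$ (requiring uniform higher moments of $D X_T^n$), then inverted through the ellipticity bound, and finally combined with the merely $L^1$-bounded factor $\operatorname{Tr}\gamma^n$ in an expression that is not the product of two $L^2$-convergent quantities; securing the uniform-in-$n$, all-orders moment estimates (with the supremum in $t$ inside the expectation) from the stated hypotheses, and choosing cleanly between the Hölder route and the uniform-integrability route, is where the real care is needed.
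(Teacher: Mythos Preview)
Your proposal is correct and follows essentially the same route as the paper: convergence of $X^n$ and $Y^n$ via Gronwall/BDG, then of the Malliavin derivatives through the representation $D_tX_T=Y_TY_t^{-1}\sigma(t,X_t)$, then of $\gamma^n$ and $(\gamma^n)^{-1}$ via the ellipticity bound $\|(\gamma^n)^{-1}-\gamma^{-1}\|\le\lambda^{-2}\|\gamma^n-\gamma\|$, and finally the two-term splitting of $u_k^n-u_k$.

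The one place where the paper is slightly cleaner is precisely the point you flagged as the main obstacle. The paper uses the \emph{other} telescoping,
\[
u_k^n(t)-u_k(t)=\sum_j(\gamma^n)^{-1}_{jk}\bigl(D_tX_T^{n,j}-D_tX_T^j\bigr)+\sum_j\bigl((\gamma^n)^{-1}_{jk}-\gamma^{-1}_{jk}\bigr)D_tX_T^{j},
\]
so that the inverse difference is paired with the \emph{fixed} derivative $D_tX_T$ rather than with $D_tX_T^n$. The cross term is then dominated pointwise by $\|(\gamma^n)^{-1}-\gamma^{-1}\|^2\sum_j|D_tX_T^j|^2\le 4\lambda^{-2}\sum_j|D_tX_T^j|^2$, whose $dt$-integral is $4\lambda^{-2}\operatorname{Tr}\gamma$, a single integrable random variable independent of $n$. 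Dominated convergence then applies directly, and the need for $L^4$ convergence of the inverses or a uniform-integrability argument for $\operatorname{Tr}\gamma^n$ disappears. Your version works too, but swapping the telescoping removes exactly the difficulty you were worried about.
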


\begin{proof}
The proof proceeds in four steps: establishing convergence of \(X_T^n \to X_T\), \(D_t X_T^n \to D_t X_T\), \((\gamma^n)^{-1} \to \gamma^{-1}\), and finally \(u_k^n \to u_k\).
%
% Step 1: Convergence of X_T^n to X_T
To show \(\mathbb{E}[|X_T^n - X_T|^2] \to 0\), we write
\[
X_t^n - X_t = \int_0^t [b_n(s, X_s^n) - b(s, X_s)] ds + \int_0^t [\sigma_n(s, X_s^n) - \sigma(s, X_s)] dB_s.
\]
Using \((a + b)^2 \leq 2a^2 + 2b^2\) and taking suprema:
\begin{align*}
\mathbb{E}\left[\sup_{t \in [0,T]} |X_t^n - X_t|^2\right] &\leq 2\mathbb{E}\left[\sup_{t \in [0,T]} \left|\int_0^t [b_n(s, X_s^n) - b(s, X_s)] ds\right|^2\right] \\
&\quad + 2\mathbb{E}\left[\sup_{t \in [0,T]} \left|\int_0^t [\sigma_n(s, X_s^n) - \sigma(s, X_s)] dB_s\right|^2\right].
\end{align*}
For the drift term, by Cauchy-Schwarz
\[
\left(\int_0^T |b_n(s, X_s^n) - b(s, X_s)| ds\right)^2 \leq T \int_0^T |b_n(s, X_s^n) - b(s, X_s)|^2 ds.
\]
For the diffusion term, by the Burkholder-Davis-Gundy inequality
\begin{align*}
\mathbb{E}\left[\sup_{t \in [0,T]} \left|\int_0^t [\sigma_n(s, X_s^n) - \sigma(s, X_s)] dB_s\right|^2\right] &\leq C_{\mathrm{BDG}} \mathbb{E}\left[\int_0^T \|\sigma_n(s, X_s^n) - \sigma(s, X_s)\|^2 ds\right].
\end{align*}
Since the convergence \((b_n,\sigma_n)\to(b,\sigma)\) is locally uniform, we employ a localisation argument. Define the stopping time \(\tau_R = \inf\{t : \max(\|X_t\|, \|X_t^n\|) > R\}\). On \([0,\tau_R]\), the processes remain in \(B_R\), where the convergence is uniform. Decomposing the differences using this uniform convergence on \(B_R\) and Lipschitz continuity
\begin{align*}
|b_n(s, X_s^n) - b(s, X_s)|^2 &\leq 2|b_n(s, X_s^n) - b(s, X_s^n)|^2 + 2|b(s, X_s^n) - b(s, X_s)|^2 \\
&\leq 2\sup_{(t,x) \in [0,T] \times B_R} |b_n(t, x) - b(t, x)|^2 + 2C^2|X_s^n - X_s|^2,
\end{align*}
and similarly for \(\sigma\). This yields
\begin{align*}
\mathbb{E}\left[\sup_{t \in [0,\tau_R]} |X_t^n - X_t|^2\right] &\leq 4T^2 \sup_{(t,x) \in [0,T] \times B_R} |b_n(t, x) - b(t, x)|^2 \\
&\quad + 4C_{\mathrm{BDG}}T \sup_{(t,x) \in [0,T] \times B_R} \|\sigma_n(t, x) - \sigma(t, x)\|^2 \\
&\quad + (4TC^2 + 4C_{\mathrm{BDG}}C^2) \int_0^T \mathbb{E}\left[\sup_{u \in [0,s \wedge \tau_R]} |X_u^n - X_u|^2\right] ds.
\end{align*}
Let 
\begin{align}
A_{n,R} &= 4T^2 \sup_{(t,x) \in [0,T] \times B_R} |b_n(t, x) - b(t, x)|^2 \notag\\
&\quad + 4C_{\mathrm{BDG}}T \sup_{(t,x) \in [0,T] \times B_R} \|\sigma_n(t, x) - \sigma(t, x)\|^2
\end{align}
and $K = 4TC^2 + 4C_{\mathrm{BDG}}C^2$. By Gronwall's inequality
\[
\mathbb{E}\left[\sup_{t \in [0,\tau_R]} |X_t^n - X_t|^2\right] \leq A_{n,R} e^{KT}.
\]
\[
\mathbb{E}|X_T^n-X_T|^2 
= \mathbb{E}\big[|X_T^n-X_T|^2\mathbf{1}_{\{\tau_R=T\}}\big]
  + \mathbb{E}\big[|X_T^n-X_T|^2\mathbf{1}_{\{\tau_R<T\}}\big],
\]
For fixed \(R\), \(A_{n,R} \to 0\) as \(n \to \infty\). By (iii), \(\sup_n \mathbb{E}\sup_{t\le T}\|X_t^n\|^2 + \mathbb{E}\sup_{t\le T}\|X_t\|^2 < \infty\), hence \(\mathbb{P}(\tau_R<T)\to0\) as \(R\to\infty\), which yields \(\mathbb{E}[|X_T^n-X_T|^2]\to0\).
%
% Step 2: Convergence of Malliavin Derivatives
The Malliavin derivative satisfies
\[
D_t X_T^n = Y_T^n (Y_t^n)^{-1} \sigma_n(t, X_t^n) \mathbf{1}_{\{t \leq T\}}, \quad D_t X_T = Y_T Y_t^{-1} \sigma(t, X_t) \mathbf{1}_{\{t \leq T\}},
\]
where the first variation processes satisfy
\[
dY_t^n = \partial_x b_n(t, X_t^n) Y_t^n \, dt + \sum_{l=1}^d \partial_x \sigma_{n,l}(t, X_t^n) Y_t^n \, dB_t^l, \quad Y_0^n = I_m.
\]
Since \(\partial_x b_n \to \partial_x b\) and \(\partial_x \sigma_n \to \partial_x \sigma\) uniformly with uniform bounds, we obtain
\[
\mathbb{E}\left[\sup_{t \in [0,T]} \|Y_t^n - Y_t\|^2\right] \to 0, \quad \mathbb{E}\left[\sup_{t \in [0,T]} \|(Y_t^n)^{-1} - Y_t^{-1}\|^2\right] \to 0.
\]
Decomposing the difference
\begin{align*}
D_t X_T^n - D_t X_T &= Y_T^n [(Y_t^n)^{-1} \sigma_n(t, X_t^n) - Y_t^{-1} \sigma(t, X_t)] \mathbf{1}_{\{t \leq T\}} \\
&\quad + (Y_T^n - Y_T) Y_t^{-1} \sigma(t, X_t) \mathbf{1}_{\{t \leq T\}}.
\end{align*}
Therefore
\begin{align*}
|D_t X_T^n - D_t X_T|^2 &\leq 2\|Y_T^n\|^2 \|(Y_t^n)^{-1} \sigma_n(t, X_t^n) - Y_t^{-1} \sigma(t, X_t)\|^2 \\
&\quad + 2\|Y_T^n - Y_T\|^2 \|Y_t^{-1} \sigma(t, X_t)\|^2.
\end{align*}
Setting \(I_1^n\) and \(I_2^n\) as the integrals of these terms, and using Cauchy-Schwarz with \(\sup_n \mathbb{E}[\|Y_T^n\|^4] < \infty\)
\[
I_2^n \leq 2(\mathbb{E}[\|Y_T^n - Y_T\|^4])^{1/2} \left(\mathbb{E}\left[\left(\int_0^T \|Y_t^{-1} \sigma(t, X_t)\|^2 dt\right)^2\right]\right)^{1/2} \to 0.
\]
Similarly, \(I_1^n \to 0\) since the integrand converges to zero and the first factor is bounded. Thus
\[
\mathbb{E}\left[\int_0^T |D_t X_T^n - D_t X_T|^2 dt\right] \to 0.
\]
%
% Step 3: Convergence of Inverse Malliavin Covariance Matrices
We have
\[
\gamma^n_{ij} - \gamma_{ij} = \int_0^T (D_t X_T^{n,i} D_t X_T^{n,j} - D_t X_T^i D_t X_T^j) dt.
\]
Decomposing the product difference
\[
\gamma^n_{ij} - \gamma_{ij} = \int_0^T \left[(D_t X_T^{n,i} - D_t X_T^i)D_t X_T^{n,j} + D_t X_T^i(D_t X_T^{n,j} - D_t X_T^j)\right] dt,
\]
we obtain
\begin{align*}
\mathbb{E}\|\gamma^n - \gamma\|^2 \leq C \sum_{i,j} \bigg( &\mathbb{E}\left[\int_0^T |D_t X_T^{n,i} - D_t X_T^i|^2 |D_t X_T^{n,j}|^2 dt\right] \\
&+ \mathbb{E}\left[\int_0^T |D_t X_T^{n,j} - D_t X_T^j|^2 |D_t X_T^i|^2 dt\right] \bigg).
\end{align*}
We had, \(D_t X_T^n \to D_t X_T\) in \(L^2([0,T] \times \Omega)\). Combined with the uniform \(L^4\) bounds for \(D_t X_T^n\) (which hold under the \(C^2\) assumptions via \(D_t X_T^n = Y_T^n(Y_t^n)^{-1}\sigma_n(t,X_t^n)\)), the right-hand side converges to 0.
Since \(\gamma^n, \gamma \geq \lambda I_m\), we have the operator norm bound
\[
\|(\gamma^n)^{-1} - \gamma^{-1}\| \leq \lambda^{-2} \|\gamma^n - \gamma\|,
\]
which implies \(\mathbb{E}[\|(\gamma^n)^{-1} - \gamma^{-1}\|^2] \leq \lambda^{-4} \mathbb{E}[\|\gamma^n - \gamma\|^2] \to 0\).
%
% Step 4: Convergence of Covering Vector Fields
We decompose
\begin{align*}
u_k^n(t) - u_k(t) &= \sum_{j=1}^m (\gamma^n)^{-1}_{jk} (D_t X_T^{n,j} - D_t X_T^j) \\
&\quad + \sum_{j=1}^m [(\gamma^n)^{-1}_{jk} - \gamma^{-1}_{jk}] D_t X_T^j.
\end{align*}
Squaring and using \(\sum_j |(\gamma^n)^{-1}_{jk}|^2 \leq \|(\gamma^n)^{-1}\|^2 \leq \lambda^{-2}\)
\begin{align*}
|u_k^n(t) - u_k(t)|^2 &\leq 2\lambda^{-2} \sum_j |D_t X_T^{n,j} - D_t X_T^j|^2 \\
&\quad + 2\|(\gamma^n)^{-1} - \gamma^{-1}\|^2 \sum_j |D_t X_T^j|^2.
\end{align*}
The integrand converges to 0 in \(L^1([0,T] \times \Omega)\) by Steps 2 and 3, and is dominated by an integrable function since \(\mathrm{Tr}(\gamma) \in L^2(\Omega)\). By dominated convergence
\[
\mathbb{E}\left[\int_0^T |u_k^n(t) - u_k(t)|^2 dt\right] \to 0. 
\]

\end{proof}

\section{Variation processes}
\label{sec:variation}

Given a stochastic process $X_t$ evolving from the initial condition $x$ in $\mathbb{R}^m$ we define the first variation process as \(Y_t = \partial X_t/\partial x \in \R^{m \times m}\). As is well known, $Y_t$ represents the sensitivity of \(X_t\) relative to small variations in the initial condition \(x\). Differentiating the SDE \eqref{sde} with respect to \(x\) yields
\[
dY_t = \partial_x b(t, X_t) Y_t \, dt + \sum_{l=1}^d \partial_x \sigma^l(t, X_t) Y_t \, dB_t^l, \quad Y_0 = I_m,
\]
where \(\partial_x b(t, X_t) \in \R^{m \times m}\), with \(\left[ \partial_x b(t, X_t) \right]_{i,j} = {\partial b^i(t, X_t)}/{\partial x_j}\), \(\partial_x \sigma^l(t, X_t) \in \R^{m \times m}\), with \(\left[ \partial_x \sigma^l(t, X_t) \right]_{i,j} = {\sigma^{i,l}(t, X_t)}/{\partial x_j}\), and \(I_m\) is the \(m \times m\) identity matrix.
The Malliavin derivative \(D_t X_T\) is the response of \(X_T\) to a perturbation in the Brownian motion at time \(t\). For \(t \leq T\)

\[
D_t X_T^j = \left[ Y_T Y_t^{-1} \sigma(t, X_t) \right]^j.
\]
For \(t > T\), \(D_t X_T = 0\) (future perturbations do not affect \(X_T\)). This follows because \(D_t X_s = 0\) for \(s < t\), and \(D_t X_t = \sigma(t, X_t)\), with the perturbation propagating via \(Y_{T,t} = Y_T Y_t^{-1}\).

The second variation process, denoted \(Z_t = \partial^2 X_t/\partial x^2\), is a third-order tensor in \(\mathbb{R}^{m \times m \times m}\). It represents the second-order sensitivities of the state process \(X_t\) with respect to the initial condition \(x\). Each component of \(Z_t\), written as \(Z_t^{i,p,q}\), corresponds to the second partial derivative \(\partial^2 X_t^i/\partial x_p \partial x_q\), where \(i, p, q = 1, \ldots, m\). This process evolves according to the following SDE
\begin{align*}
dZ_t &= \left[ \partial_{xx} b(t, X_t) (Y_t \otimes Y_t) + \partial_x b(t, X_t) Z_t \right] dt \\
&\quad + \sum_{l=1}^d \left[ \partial_{xx} \sigma^l(t, X_t) (Y_t \otimes Y_t) + \partial_x \sigma^l(t, X_t) Z_t \right] dB_t^l,
\end{align*}
With the initial condition \( Z_0 = 0 \), the stochastic differential equation for the second variation process \( Z_t \) is characterised by a collection of terms that  describe the system's dynamics. The Hessian tensor of the drift coefficient \( b \), denoted \( \partial_{xx} b(t, X_t) \in \mathbb{R}^{m \times m \times m} \), has components defined as \( \left[ \partial_{xx} b \right]_{i,j,k} = \partial^2 b^i(t, X_t) /(\partial x_j \partial x_k)\), representing the second derivatives of the \( i \)-th component of \( b \) with respect to the spatial variables \( x_j \) and \( x_k \). Likewise, the Hessian tensor of the \( l \)-th column of the diffusion coefficient \( \sigma \), expressed as \( \partial_{xx} \sigma^l(t, X_t) \in \mathbb{R}^{m \times m \times m} \), accounts for higher-order effects in the stochastic terms. The first variation process \( Y_t \in \mathbb{R}^{m \times m} \) captures the first-order sensitivities of the state process \( X_t \) with respect to the initial condition \( x \). This leads to the tensor product \( Y_t \otimes Y_t \), a fourth-order tensor in \( \mathbb{R}^{m \times m \times m \times m} \), which interacts with the Hessian tensors through contraction, enriching the structure of the second-variation SDE. The Jacobian matrices \( \partial_x b(t, X_t) \in \mathbb{R}^{m \times m} \) and \( \partial_x \sigma^l(t, X_t) \in \mathbb{R}^{m \times m} \) of \( b \) and \( \sigma^l \), respectively, provide the first-order spatial dependencies of the drift and diffusion coefficients. Driving the stochastic nature of the system, \( B_t^l \) represents the \( l \)-th component of a \( d \)-dimensional Brownian motion, introducing randomness into the evolution of \( Z_t \).
To clarify the tensor contraction, the term \(\partial_{xx} b(t, X_t) (Y_t \otimes Y_t)\) for each component \(Z_t^{i,p,q}\) is
\[
\left[ \partial_{xx} b(t, X_t) (Y_t \otimes Y_t) \right]^{i,p,q} = \sum_{j,k=1}^m \frac{\partial^2 b^i(t, X_t)}{\partial x_j \partial x_k} Y_t^{j,p} Y_t^{k,q}.
\]
Similarly, the term \(\partial_x b(t, X_t) Z_t\) is
\[
\left[ \partial_x b(t, X_t) Z_t \right]^{i,p,q} = \sum_{r=1}^m \frac{\partial b^i(t, X_t) }{\partial x_r}Z_t^{r,p,q}.
\]
The diffusion terms follow the same structure with \(\sigma^l\) replacing \(b\). By the classical existence‐and‐uniqueness theory for Itô–SDEs, together with the fact that all first and second‐order derivatives of \(b\) and \(\sigma\) are uniformly bounded (so that the coefficient‐processes in the linear equation are globally Lipschitz and square‐integrable), one obtains a unique strong solution \(Z_t\) (see, e.g., Kunita \cite{kunita1997stochastic}, Ch. 4).  

%DA QUI

\section{Proof of Theorem \ref{thm:main}}
\label{sec:proofmain}
In this section we prove our main result, i.e., Theorem \ref{thm:main}. The proof relies on the results we  obtained in Section \ref{sec:malliavin} and Section \ref{sec:variation} for covering vector fields and variation processes to define the gradient of the log density for general stochastic differential equations using a Bismut-type formula. This formulation enables the computation of score functions for general nonlinear diffusion processes governed by stochastic differential equations. In this section, we first establish several preliminary results before proceeding to the proof of our main Theorem \ref{thm:main}.

We begin by considering the representation of the Skorokhod integral $\delta(u_k)$.  To this 	end, we first refer to Theorem 3.2.9 in \cite{nualart2006malliavin}, which provides a decomposition of the Skorokhod integral of a random field composed with a random variable. We also refer to the seminal works \cite{nualart1988stochastic, nualart1986generalized}, which study generalised stochastic integrals and anticipating integrals in Malliavin calculus. The result arises in the context of substitution formulae for stochastic integrals. Consider a random field \( u = \{ u_t(x) : 0 \leq t \leq T, x \in \mathbb{R}^m \} \) with \( u_t(x) \in \mathbb{R}^d \), which is square integrable and adapted for each \( x \in \mathbb{R}^m \). For each \( x \), one can define the It\^o integral \( \int_0^T u_t(x) \cdot dB_t \). Given an \( m \)-dimensional random variable \( F: \Omega \to \mathbb{R}^m \), Theorem 3.2.9 addresses the Skorokhod integrability of the nonadapted process \( u(F) = \{ u_t(F), 0 \leq t \leq T \} \) and provides a formula for its Skorokhod integral under the following conditions

\begin{itemize}
    \item[(h1)] For each \( x \in \mathbb{R}^m \) and \( t \in [0,T] \), \( u_t(x) \) is \( \mathcal{F}_t \)-measurable.
    
    \item[(h2)] There exist constants \( p \geq 2 \) and \( \alpha > m \) such that
    \[
    E(|u_t(x) - u_t(y)|^p) \leq C_{t,K} |x - y|^{\alpha},
    \]
    for all \( |x|, |y| \leq K \), \( K > 0 \), where \( \int_0^T C_{t,K} dt < \infty \). Moreover,
    \[
    \int_0^T E(|u_t(0)|^2) dt < \infty.
    \]
    \item[(h3)] For each \( (t, \omega) \), the mapping \( x \mapsto u_t(x) \) is continuously differentiable, and for each \( K > 0 \),
    \[
    \int_0^T E \left( \sup_{|x| \leq K} |\nabla u_t(x)|^q \right) dt < \infty,
    \]
    where \( q \geq 4 \) and \( q > m \).
\end{itemize}

For the reader’s convenience, we recall Theorem 3.2.9 from \cite{nualart2006malliavin} below.

\begin{theorem}[Theorem 3.2.9, \cite{nualart2006malliavin}]
\label{thm:nualart}
For a random field \( u = \{ u_t(x) : 0 \leq t \leq T, x \in \mathbb{R}^m \} \) with \( u_t(x) \in \mathbb{R}^d \), and a random variable \( F: \Omega \to \mathbb{R}^m \) such that \( F^i \in \mathbb{D}^{1,4}_{\text{loc}} \) for \( 1 \leq i \leq m \), assume \( u \) satisfies the conditions (h1) and (h3) for Skorokhod integrability.
Then, the composition \( u(F) = \{ u_t(F), 0 \leq t \leq T \} \) belongs to \( (\text{Dom} (\delta))_{\text{loc}} \), and the Skorokhod integral of \( u(F) \) is given by
\[
\delta(u(F)) = \left. \int_0^T u_t(x) \cdot dB_t \right|_{x=F} - \sum_{j=1}^m \int_0^T \partial_j u_t(F) \cdot D_t F^j \, dt,
\]
where \( B_t \) is a \( d \)-dimensional Brownian motion, \( \partial_j u_t(x) = \partial u_t(x)/\partial x_j  \) is the partial derivative of \( u_t(x) \) with respect to the \( j \)-th component of \( x \), \( D_t F^j \) is the Malliavin derivative of the \( j \)-th component of \( F \), and \( \left. \int_0^T u_t(x) \cdot dB_t \right|_{x=F} \) denotes the It\^o integral evaluated at \( x = F \).
We note that no smoothness in the sense of Malliavin calculus is required on the process \( u_t(x) \) itself, but the above conditions ensure the integrability of \( u(F) \) in the Skorokhod sense. Furthermore, the operator \( \delta \) is not known to be local in \( \text{Dom} (\delta) \), and thus the value of \( \delta(u(F)) \) may depend on the particular localising sequence used in the definition of \( (\text{Dom}( \delta))_{\text{loc}} \).
\end{theorem}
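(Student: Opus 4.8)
This is the substitution theorem for Skorokhod integrals (Theorem~3.2.9 of \citep{nualart2006malliavin}); I outline the argument one would give. The plan is to verify the formula through the duality that characterises the divergence operator, after discretising the substituted variable $F$ and replacing the random field $u_t(\cdot)$ by its values on a fine grid. Recall that $v\in\Dom(\delta)$ with $\delta(v)=G$ exactly when $\E[\,G\,\Phi\,]=\E\bigl[\int_0^T D_t\Phi\cdot v_t\,dt\bigr]$ for all smooth cylindrical $\Phi$, and that the product rule $\delta(\Phi v)=\Phi\,\delta(v)-\int_0^T D_t\Phi\cdot v_t\,dt$ holds for $\Phi\in\mathbb{D}^{1,2}$ and adapted $v\in L^2([0,T]\times\Omega;\R^d)$. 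Because the conclusion is only local, one works on the sets $\Omega_n=\{\sup_{1\le i\le m}|F^i|\le n\}$, on which (h1)--(h3) together with $F^i\in\mathbb{D}^{1,4}_{\mathrm{loc}}$ make every moment in play finite; since $\delta$ is not local one carries the localising sequence along throughout, which is exactly why the statement lands in $(\Dom\delta)_{\mathrm{loc}}$.

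Fix $\eps>0$ and let $\{\phi^\eps_k\}_k\subset C_0^\infty(\R^m)$ be a smooth partition of unity subordinate to the cubes of a mesh-$\eps$ lattice, with centres $x_k$, satisfying $\sum_k\phi^\eps_k\equiv 1$, $\|\nabla\phi^\eps_k\|_\infty\lesssim\eps^{-1}$, uniformly bounded overlap, and the first-order reproduction property $\sum_k\nabla\phi^\eps_k(y)\otimes(x_k-y)\to I_m$ locally uniformly as $\eps\to0$. For each $k$ the process $u(x_k)$ is adapted and square-integrable, hence Skorokhod integrable with $\delta(u(x_k))=\int_0^T u_t(x_k)\cdot dB_t$ (the ordinary It\^o integral), while $\phi^\eps_k(F)\in\mathbb{D}^{1,4}$ since $\phi^\eps_k\in C_0^\infty$ and $F^i\in\mathbb{D}^{1,4}_{\mathrm{loc}}$. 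Applying the product rule term by term and summing, the piecewise approximation $\sum_k\phi^\eps_k(F)u(x_k)$ of $u(F)$ satisfies, on each $\Omega_n$,
\[
\delta\!\Bigl(\sum_k\phi^\eps_k(F)\,u(x_k)\Bigr)
= \sum_k\phi^\eps_k(F)\int_0^T u_t(x_k)\cdot dB_t
- \sum_{j=1}^m\int_0^T\Bigl(\sum_k\partial_j\phi^\eps_k(F)\,u_t(x_k)\Bigr)\cdot D_tF^j\,dt .
\]

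Passing to the limit $\eps\to0$ finishes the proof. For the first term, (h1)--(h2) and Kolmogorov's continuity theorem (the increments of $x\mapsto\int_0^T u_t(x)\cdot dB_t$ being controlled by Burkholder--Davis--Gundy together with the H\"older bound in (h2), and likewise for its $x$-derivative) yield a continuous version of the parametrised It\^o integral, so $\sum_k\phi^\eps_k(F)\int_0^T u_t(x_k)\cdot dB_t\to\bigl.\int_0^T u_t(x)\cdot dB_t\bigr|_{x=F}$ in $L^p(\Omega_n)$. For the correction term, use $\sum_k\nabla\phi^\eps_k\equiv 0$ to rewrite $\sum_k\partial_j\phi^\eps_k(F)u_t(x_k)=\sum_k\partial_j\phi^\eps_k(F)\bigl(u_t(x_k)-u_t(F)\bigr)$; a first-order Taylor expansion of $u_t(\cdot)$ at $F$ together with the reproduction property shows this converges to $\partial_j u_t(F)$, with (h3) supplying the domination needed to interchange the limit with $\int_0^T(\cdot)\,dt$ and $\E[\cdot]$. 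Since the integrands $\sum_k\phi^\eps_k(F)u(x_k)$ converge to $u(F)$ in $L^2([0,T]\times\Omega_n;\R^d)$ and the right-hand sides converge in $L^p(\Omega_n)$, closedness of $\delta$ gives $u(F)\in(\Dom\delta)_{\mathrm{loc}}$ with the stated identity on $\Omega_n$; letting $n\to\infty$ concludes.

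The main obstacle is the correction term: one must establish $\sum_k\partial_j\phi^\eps_k(F)\bigl(u_t(x_k)-u_t(F)\bigr)\to\partial_j u_t(F)$ in a topology strong enough to survive integration against $D_tF^j$ and the expectation. This hinges on choosing the partition of unity so that $\|\nabla\phi^\eps_k\|_\infty\lesssim\eps^{-1}$ precisely cancels the $O(\eps)$ size of $u_t(x_k)-u_t(F)$ on $\supp\phi^\eps_k$, with only boundedly many overlapping cells, coupled with the exact H\"older and moment bounds of (h2)--(h3); the companion requirement that the parametrised It\^o integral and its spatial derivative possess jointly measurable continuous versions (again via Kolmogorov) is the other place those hypotheses are indispensable. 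The localisation bookkeeping --- consistent choice of versions on the $\Omega_n$ and the non-locality of $\delta$ --- is routine but must be tracked explicitly.
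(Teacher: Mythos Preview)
The paper does not prove this theorem: it is quoted verbatim as Theorem~3.2.9 of \citep{nualart2006malliavin} and used as a black box in the derivation of the main result (see the sentence ``For the reader's convenience, we recall Theorem 3.2.9 from \citep{nualart2006malliavin} below''). There is therefore no ``paper's own proof'' to compare your proposal against.

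That said, your outline is the right strategy and close to Nualart's actual argument: freeze $x$, use adaptedness so that $\delta(u(x_k))$ is an ordinary It\^o integral, apply the product rule $\delta(\Phi v)=\Phi\,\delta(v)-\langle D\Phi,v\rangle_H$ with $\Phi=\phi_k^\eps(F)$, and pass to the limit via Kolmogorov continuity of the parametrised It\^o integral and closedness of $\delta$. Two points deserve care. First, the ``first-order reproduction property'' $\sum_k\nabla\phi_k^\eps(y)\otimes(x_k-y)\to I_m$ is not automatic for an arbitrary smooth partition of unity; you need to construct the $\phi_k^\eps$ so that $\sum_k\phi_k^\eps(y)\,x_k$ interpolates $y$ to first order (e.g.\ take translates of a single bump whose first moments vanish), and you should state this explicitly. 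Second, in the correction term you must control the Taylor remainder $u_t(x_k)-u_t(F)-\nabla u_t(F)(x_k-F)$ uniformly in $k$ on $\supp\phi_k^\eps$; hypothesis (h3) gives only $C^1$ regularity with an $L^q$ bound on $\sup_{|x|\le K}|\nabla u_t(x)|$, so the remainder is $o(\eps)$ only in an averaged sense, and the passage to the limit in $L^p(\Omega_n)$ needs a dominated-convergence argument rather than a pointwise one. Nualart's own proof sidesteps this by working with a dyadic discretisation of $F$ itself rather than a partition of unity in the target, which makes the limit slightly more direct; your route is equivalent but the bookkeeping on the remainder is where the real work lies.
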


%\begin{proof}
%The proof is presented in \ref{proof:theorem3.2.9}.
%\end{proof}

\subsection{Useful lemmas}

In this section, we state and prove some key results that are useful for deriving the 
score function formula \eqref{score}-\eqref{score1} for nonlinear SDEs 
in terms of the first and second variation processes.
\begin{lemma}[SDE for the inverse first variation process]
\label{lemma:dYt-inverse}
Let $Y_t$ be the first variation process. The inverse \( Y_t^{-1} \) satisfies the SDE
\begin{align*}
dY_t^{-1} &= - Y_t^{-1} \partial_x b(t, X_t) \, dt - \sum_{l=1}^d Y_t^{-1} \partial_x \sigma^l(t, X_t) \, dB_t^l + \sum_{l=1}^d Y_t^{-1} \left( \partial_x \sigma^l(t, X_t) \right)^2 \, dt,
\end{align*}
with initial condition \( Y_0^{-1} = I_m \), where \( \left( \partial_x \sigma^l(t, X_t) \right)^2 = \partial_x \sigma^l(t, X_t) \partial_x \sigma^l(t, X_t) \).
\end{lemma}

\begin{proof}
Since \( Y_t Y_t^{-1} = I_m \) is constant, its differential is zero, i.e.,
\begin{align*}
d(Y_t Y_t^{-1}) &= dY_t Y_t^{-1} + Y_t dY_t^{-1} + d[Y_t, Y_t^{-1}] = 0.
\end{align*}
Given the first-variation SDE
\begin{align}
dY_t &= \partial_x b(t, X_t) Y_t \, dt + \sum_{l=1}^d \partial_x \sigma^l(t, X_t) Y_t \, dB_t^l,
\label{fvSDE}
\end{align}
we compute
\begin{align*}
dY_t Y_t^{-1} &= \left( \partial_x b(t, X_t) Y_t \, dt + \sum_{l=1}^d \partial_x \sigma^l(t, X_t) Y_t \, dB_t^l \right) Y_t^{-1} \\
&= \partial_x b(t, X_t) \, dt + \sum_{l=1}^d \partial_x \sigma^l(t, X_t) \, dB_t^l,
\end{align*}
where we used \( Y_t Y_t^{-1} = I_m \). We assume that
\begin{align*}
dY_t^{-1} &= \mu_t \, dt + \sum_{l=1}^d \nu_t^l \, dB_t^l.
\end{align*}
We need to show \( Y_t^{-1} \) is an It\^o process. To this end, consider the SDE \eqref{fvSDE}
with \( Y_0 = I_m \). The coefficients \( \partial_x b(t, X_t) \) and \( \partial_x \sigma^l(t, X_t) \) are assumed to be bounded and measurable. The SDE  \eqref{fvSDE} is linear, and under standard conditions (e.g., Lipschitz continuity of the coefficients), the solution \( Y_t \) exists, is unique, and is a semimartingale, specifically, an It\^o process, adapted to the filtration generated by the Brownian motions \( B_t^l \). Moreover, since \( Y_0 = I_m \) is invertible and the coefficients satisfy regularity conditions, \( Y_t \) remains invertible almost surely for all \( t \geq 0 \). Define the function \( f: \text{GL}(m, \mathbb{R}) \to \text{GL}(m, \mathbb{R}) \) by \( f(A) = A^{-1} \), where \( \text{GL}(m, \mathbb{R}) \) is the group of \( m \times m \) invertible matrices. The map \( f \) is smooth (infinitely differentiable) on \( \text{GL}(m, \mathbb{R}) \), with first derivative \( Df(A)H = -A^{-1} H A^{-1} \) and second derivative terms involving higher-order products. Applying It\^o' s formula to \( Y_t^{-1} = f(Y_t) \), where \( Y_t \) is an It\^o process, yields a stochastic differential of the form
\begin{align*}
dY_t^{-1} &= \mu_t \, dt + \sum_{l=1}^d \nu_t^l \, dB_t^l,
\end{align*}
where \( \mu_t \) and \( \nu_t^l \) are adapted processes derived from the drift and diffusion terms of \( Y_t \). Thus, \( Y_t^{-1} \) is itself an Itô process, and we proceed to determine \( \mu_t \) and \( \nu_t^l \).
Continuing with the proof
\begin{align*}
Y_t dY_t^{-1} &= Y_t \mu_t \, dt + \sum_{l=1}^d Y_t \nu_t^l \, dB_t^l.
\end{align*}
The quadratic covariation is
\begin{align*}
d[Y_t, Y_t^{-1}] &= \sum_{l=1}^d \left( \partial_x \sigma^l(t, X_t) Y_t \right) \nu_t^l \, dt.
\end{align*}
Substituting into the differential
\begin{align*}
d(Y_t Y_t^{-1}) &= \left( \partial_x b(t, X_t) + Y_t \mu_t + \sum_{l=1}^d \partial_x \sigma^l(t, X_t) Y_t \nu_t^l \right) dt \\
&\quad + \sum_{l=1}^d \left( \partial_x \sigma^l(t, X_t) + Y_t \nu_t^l \right) dB_t^l \\
&= 0,
\end{align*}
and equating the coefficients yields
\begin{align*}
&dB_t^l\text{-term}: Y_t \nu_t^l + \partial_x \sigma^l(t, X_t) = 0, \quad \Rightarrow\quad  \nu_t^l = - Y_t^{-1} \partial_x \sigma^l(t, X_t), \\
&dt\text{-term}: \quad Y_t \mu_t + \partial_x b(t, X_t) + \sum_{l=1}^d \partial_x \sigma^l(t, X_t) Y_t (- Y_t^{-1} \partial_x \sigma^l(t, X_t)) = 0.
\end{align*}
Simplifying 
\begin{align*}
\sum_{l=1}^d \partial_x \sigma^l(t, X_t) Y_t (- Y_t^{-1} \partial_x \sigma^l(t, X_t)) &= - \sum_{l=1}^d \partial_x \sigma^l(t, X_t) \partial_x \sigma^l(t, X_t),
\end{align*}
\begin{align*}
Y_t \mu_t + \partial_x b(t, X_t) - \sum_{l=1}^d \partial_x \sigma^l(t, X_t) \partial_x \sigma^l(t, X_t) &= 0,
\end{align*}
\begin{align*}
\mu_t &= Y_t^{-1} \left( - \partial_x b(t, X_t) + \sum_{l=1}^d \left( \partial_x \sigma^l(t, X_t) \right)^2 \right).
\end{align*}
Thus
\begin{align*}
dY_t^{-1} &= Y_t^{-1} \left( - \partial_x b(t, X_t) + \sum_{l=1}^d \left( \partial_x \sigma^l(t, X_t) \right)^2 \right) dt - \sum_{l=1}^d Y_t^{-1} \partial_x \sigma^l(t, X_t) \, dB_t^l.
\end{align*}
The initial condition \( Y_0^{-1} = I_m \) holds since \( Y_0 = I_m \).

\end{proof}

\begin{lemma}[Malliavin derivative of inverse matrices]
\label{lemma:D-of-A-inverse}
Let $A$ be a random $m \times m$ matrix that is invertible almost surely, and assume there exists $\lambda > 0$ such that $A \geq \lambda I_m$ almost surely (uniform ellipticity). Suppose further that $A$ and $A^{-1}$ are Malliavin differentiable. Then, for each $t \in [0,T]$,
\[
D_t(A^{-1}) = -A^{-1}(D_tA)A^{-1}.
\]
\end{lemma}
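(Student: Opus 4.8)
The plan is to differentiate the defining identity $A A^{-1} = I_m$ and then solve algebraically for $D_t(A^{-1})$, treating the Malliavin derivative $D_t$ as a derivation obeying the Leibniz rule on entrywise products of Malliavin-differentiable random variables. First I would pass to components: for all $i,j \in \{1,\dots,m\}$ one has $\sum_{k=1}^m A_{ik}\,(A^{-1})_{kj} = \delta_{ij}$. Each entry $A_{ik}$ lies in $\mathbb{D}^{1,2}$ by hypothesis, and the uniform ellipticity $A \ge \lambda I_m$ a.s. gives $\|A^{-1}\|_{\mathrm{op}} \le \lambda^{-1}$ a.s., so every entry $(A^{-1})_{kj}$ is a bounded random variable which, by assumption, is also Malliavin differentiable. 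Applying $D_t$ to both sides of the identity, using $D_t\delta_{ij}=0$ and the product rule $D_t(FG) = (D_tF)\,G + F\,D_tG$, I obtain
\[
\sum_{k=1}^m \Bigl[ (D_tA_{ik})\,(A^{-1})_{kj} + A_{ik}\,D_t\bigl((A^{-1})_{kj}\bigr) \Bigr] = 0 \qquad \text{for all } i,j .
\]

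In matrix notation this is precisely $(D_tA)\,A^{-1} + A\,D_t(A^{-1}) = 0$. Left-multiplying by $A^{-1}$ — legitimate entrywise, since the entries of $A^{-1}$ are bounded — yields $D_t(A^{-1}) = -A^{-1}(D_tA)A^{-1}$, which is the claim.

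The step requiring care, and the main obstacle, is the rigorous justification of the product rule in this non-smooth setting: a product of two $\mathbb{D}^{1,2}$ random variables need not remain in $\mathbb{D}^{1,2}$ without extra integrability, and the map $M \mapsto M^{-1}$ is not globally $\mathcal{C}^1_b$. The uniform ellipticity hypothesis is exactly what removes this difficulty, and I would make the argument airtight by the standard truncation device: the map $\phi(M) = M^{-1}$ is $\mathcal{C}^\infty$ on the open set $\{M : M > \tfrac{\lambda}{2} I_m\}$ with $D\phi(M)[H] = -M^{-1}HM^{-1}$, hence admits a $\mathcal{C}^\infty_b$ extension $\widetilde\phi : \mathbb{R}^{m\times m}\to\mathbb{R}^{m\times m}$ agreeing with $\phi$, with matching first derivative, on $\{M \ge \lambda I_m\}$. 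Since each entry of $A$ lies in $\mathbb{D}^{1,2}$, the Malliavin chain rule (Proposition~1.2.3 in \citep{nualart2006malliavin}) applies to $\widetilde\phi(A)$ and gives $D_t\widetilde\phi(A) = D\widetilde\phi(A)[D_tA]$; by locality of the Malliavin derivative, on the full-measure event $\{A \ge \lambda I_m\}$ one has $\widetilde\phi(A) = A^{-1}$ and $D\widetilde\phi(A) = D\phi(A)$, so $D_t(A^{-1}) = -A^{-1}(D_tA)A^{-1}$. This reproduces the same formula while making every step rigorous, and it is consistent with how the chain rule is invoked elsewhere in the paper; the first, Leibniz-rule derivation can then be kept as the conceptual skeleton with this truncation argument supplying the justification.
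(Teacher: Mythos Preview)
Your proposal is correct and follows essentially the same approach as the paper: differentiate the identity $AA^{-1}=I_m$ with the Leibniz rule to obtain $(D_tA)A^{-1}+A\,D_t(A^{-1})=0$, then left-multiply by $A^{-1}$. The paper's proof is in fact briefer than yours---it simply applies the product rule in matrix form without the entrywise discussion or the truncation/chain-rule justification---and relegates the integrability issues you carefully address to a separate remark following the lemma; your added rigor is welcome but not a different route.
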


\begin{proof}
Since $AA^{-1} = I$, applying the Malliavin derivative yields
\[
D_t(AA^{-1}) = (D_tA)A^{-1} + A(D_tA^{-1}) = D_tI = 0.
\]
Consequently,
\[
A(D_tA^{-1}) = -(D_tA)A^{-1}.
\]
Multiplying on the left by $A^{-1}$ gives the desired result.

\end{proof}

\begin{remark}[The role of integrability and ellipticity conditions]
\label{rem:ellipticity-vs-moments}
While Lemma~\ref{lemma:D-of-A-inverse} is purely algebraic, ensuring that $A^{-1}$ (and the right-hand side) belongs to the Malliavin--Sobolev space $\mathbb{D}^{1,2}$ requires controlling the integrability of $\|A^{-1}\|$.
If $A \geq \lambda I_m$ almost surely for some deterministic $\lambda > 0$, then $\|A^{-1}\| \leq \lambda^{-1}$ and all polynomial moments of $A^{-1}$ are finite. Under this bound, every entry of $A^{-1}$ lies in $\mathbb{D}^\infty$, and the componentwise identity
\[
D(A^{-1})^{ij} = -\sum_{k,\ell=1}^{m}(A^{-1})^{ik}(A^{-1})^{\ell j}\,D A^{k\ell},
\]
established in \cite[Lemma 2.1.6]{nualart2006malliavin}, is justified.
Without such a lower bound on the eigenvalues, the inverse may fail to be integrable. In dimension one, for instance, $A = \int_0^{T}B_s^{2}\,ds > 0$ satisfies $A^{-q} \in L^{1}(\Omega)$ only for $q < \frac{1}{2}$; thus $A^{-1} \notin \mathbb{D}^{1,2}$ and Lemma 2.1.6 cannot be applied. Uniform ellipticity (or, more generally, the moment hypothesis $|\det A|^{-1} \in L^{p}$ for all $p \geq 1$ required in \cite[Lemma 2.1.6]{nualart2006malliavin}) prevents this pathology.
\end{remark}

\begin{lemma}[Commutativity of Malliavin and partial derivatives]
\label{lemma:commutativity-malliavin-partials}
Let $X_t = X_t(x)$ be the solution to the stochastic differential equation
\[
\mathrm{d}X_t = b(t,X_t)\,\mathrm{d}t + \sigma(t,X_t)\,\mathrm{d}B_t,
\quad X_0 = x, \quad 0 \leq t \leq T,
\]
where $B_t$ is a $d$-dimensional standard Brownian motion and $x \in \mathbb{R}^{m}$.
Assume the coefficients satisfy
$b, \sigma \in \mathcal{C}^{2}_{\!b}([0,T] \times \mathbb{R}^{m})$
(all partial derivatives up to second order are continuous and bounded).
Let $Y_t = \partial X_t/\partial x$ denote the first variation process.
Then, for every $0 \leq t \leq T$, where $D_t$ denotes the Malliavin derivative,
\[
D_t\left(\frac{\partial X_T}{\partial x}\right)
=
\frac{\partial}{\partial x}(D_t X_T).
\]
\end{lemma}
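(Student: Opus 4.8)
The plan is to realise both sides of the identity as the time-$T$ value of one and the same linear stochastic differential equation. Fix $t\in[0,T]$ and, for $s\in[t,T]$, set $V_s:=D_tY_s$ and $W_s:=\partial_x\bigl(D_tX_s\bigr)$; since $\partial X_s/\partial x=Y_s$ (Section~\ref{sec:variation}), $V_T$ is the left-hand side and $W_T$ the right-hand side of the assertion. I will derive a linear matrix-valued SDE on $[t,T]$ for each of $V$ and $W$, check that the two equations and their initial data at $s=t$ coincide, and conclude by pathwise uniqueness; evaluation at $s=T$ then finishes the proof. (All computations below are carried out coordinatewise in the $d$-dimensional Brownian/Malliavin index, which I suppress.)

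\textbf{The equation for $W$.} For $s\ge t$ the Malliavin derivative $D_tX_s$ solves, in the variable $s$, the linear SDE
\[
 D_tX_s=\sigma(t,X_t)+\int_t^s\partial_xb(r,X_r)\,D_tX_r\,dr+\sum_{l=1}^d\int_t^s\partial_x\sigma^l(r,X_r)\,D_tX_r\,dB_r^l ,
\]
with $D_tX_s=0$ for $s<t$; this is just the integral form of the representation $D_tX_s=Y_sY_t^{-1}\sigma(t,X_t)$ recorded in Section~\ref{sec:variation}. Apply $\partial_x$ to this identity. Interchanging $\partial_x$ with the $dr$- and $dB^l$-integrals is legitimate because, under the $\mathcal{C}^2_b$ hypothesis on $b$ and $\sigma$, the map $x\mapsto X_\cdot$ is a twice-differentiable stochastic flow whose first and second variation processes $Y,Z$ possess moments of every order (Kunita~\citep{kunita1997stochastic}, Ch.~4); then the chain rule $\partial_x[\partial_xb(r,X_r)]=\partial_{xx}b(r,X_r)\,Y_r$ (and likewise for $\sigma^l$) and the product rule show that $W$ satisfies a linear SDE with coefficient matrices $\partial_xb(r,X_r),\partial_x\sigma^l(r,X_r)$, inhomogeneous forcing $\partial_{xx}b(r,X_r)(Y_r\otimes D_tX_r)$ and $\partial_{xx}\sigma^l(r,X_r)(Y_r\otimes D_tX_r)$, and initial value $W_t=\partial_x\sigma(t,X_t)\,Y_t$.

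\textbf{The equation for $V$, and the matching.} Starting instead from the first-variation SDE $Y_s=I_m+\int_0^s\partial_xb(r,X_r)Y_r\,dr+\sum_{l=1}^d\int_0^s\partial_x\sigma^l(r,X_r)Y_r\,dB_r^l$ of Section~\ref{sec:variation} and applying $D_t$, I use the commutation rules $D_t\!\int_0^s v_r\,dr=\int_t^s D_tv_r\,dr$ and $D_t\!\int_0^s u_r\,dB_r^l=u_t\,\mathbf{1}_{\{t\le s\}}+\int_t^s D_tu_r\,dB_r^l$ — valid since $X_r,Y_r$, and hence $\partial_xb(r,X_r)Y_r$ and $\partial_x\sigma^l(r,X_r)Y_r$, lie in $\mathbb{D}^{1,p}$ for every $p$ under the bounded-derivative hypothesis (smooth functions of $\mathbb{D}^{1,p}$ elements remain in $\mathbb{D}^{1,p}$; cf.\ Proposition~1.2.3 of \citep{nualart2006malliavin} and the lemmas above) — together with the Malliavin chain rule $D_t[\partial_xb(r,X_r)]=\partial_{xx}b(r,X_r)\,D_tX_r$. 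The boundary terms generated by the $l$-th stochastic integral at $s=t$ reassemble precisely into $\partial_x\sigma(t,X_t)Y_t=W_t$, so the initial data agree; and the interior terms are $\partial_{xx}b(r,X_r)(D_tX_r\otimes Y_r)+\partial_xb(r,X_r)V_r$ in the drift and their $\sigma^l$-analogues in the diffusion. The only apparent difference between the two equations is the order of the two arguments fed to the Hessians, $\partial_{xx}b(r,X_r)(D_tX_r\otimes Y_r)$ versus $\partial_{xx}b(r,X_r)(Y_r\otimes D_tX_r)$ (and similarly for $\sigma^l$); but $\partial_{xx}b$ is symmetric in its two lower indices by Schwarz's theorem ($b\in\mathcal{C}^2$), so these coincide. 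Thus $V$ and $W$ solve the same linear SDE on $[t,T]$ — with bounded adapted coefficients and $L^p$-integrable forcing — with the same value at $s=t$; by uniqueness $V\equiv W$ on $[t,T]$, and specialising to $s=T$ gives $D_t(\partial X_T/\partial x)=\partial_x(D_tX_T)$.

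\textbf{Main obstacle.} The algebra in the final step is immediate once both SDEs are in hand; the substantive content lies in justifying the two interchanges of a derivative with an integral. For $\partial_x$ this rests on the $\mathcal{C}^2$-differentiability of the stochastic flow and on uniform $L^p$-bounds for $Y$ and $Z$; for $D_t$ it rests on the Malliavin--Sobolev regularity $X,Y\in\bigcap_p\mathbb{D}^{1,p}$ and on the dominated-convergence arguments that permit differentiation under the stochastic integral — all standard under the stated $\mathcal{C}^2_b$ hypotheses (Kunita~\citep{kunita1997stochastic}; Nualart~\citep{nualart2006malliavin}, Ch.~1--2). A more conceptual alternative is to note that the directional Malliavin derivative $h\mapsto\langle D\,\cdot\,,h\rangle_H$ and the spatial derivative $\partial_x$ are the two partial (G\^{a}teaux) derivatives of the jointly $\mathcal{C}^2$ map $(x,h)\mapsto X_T^{x}\bigl(\omega+\int_0^{\cdot}h_r\,dr\bigr)$, so that the asserted identity is the equality of mixed second derivatives, read off kernelwise in $h$.
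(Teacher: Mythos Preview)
Your proof is correct and takes a genuinely different route from the paper. The paper works entirely with the closed-form representation $D_tX_T=Y_TY_t^{-1}\sigma(t,X_t)$: it applies $\partial_x$ to this product directly (using $\partial_xY_T=Z_T$, $\partial_xY_t^{-1}=-Y_t^{-1}Z_tY_t^{-1}$, and $\partial_x\sigma(t,X_t)=\partial_x\sigma(t,X_t)Y_t$) to obtain an explicit three-term expression, then separately writes the integral equation for $Y_T$, applies $D_t$, and asserts that the resulting linear equation is solved by that same three-term expression. Your argument, by contrast, never invokes the closed form: you show that $s\mapsto D_tY_s$ and $s\mapsto\partial_x(D_tX_s)$ satisfy the \emph{same} linear SDE on $[t,T]$ with the same datum $\partial_x\sigma(t,X_t)Y_t$ at $s=t$, and conclude by pathwise uniqueness. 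The paper's approach is shorter if one already trusts the representation formula and is happy to verify an ansatz; yours is more self-contained and transparent about where the regularity hypotheses enter (the two interchange-of-derivative-and-integral steps you flag), and it makes the role of Schwarz symmetry of $\partial_{xx}b,\partial_{xx}\sigma^l$ explicit, which the paper's computation hides inside the product-rule algebra. Your closing remark---that the identity is the equality of mixed G\^{a}teaux partials of the $\mathcal{C}^2$ map $(x,h)\mapsto X_T^x(\omega+\int_0^\cdot h)$---is a nice conceptual gloss that the paper does not offer.
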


\begin{proof}
We compute both sides explicitly to verify equality. The SDE for $X_t$ can be rewritten in the integral form
\begin{align*}
X_t &= x + \int_0^t b(s, X_s) \, ds + \int_0^t \sigma(s, X_s) \, dB_s.
\end{align*}
The first variation process \(Y_t = \partial X_t/\partial x\) satisfies
\begin{align*}
dY_t &= \partial_x b(t, X_t) Y_t \, \dt + \sum_{l=1}^d \partial_x \sigma^l(t, X_t) Y_t \, dB_t^l, \quad Y_0 = I_m.
\end{align*}
We have seen that the inverse \(Y_t^{-1}\) satisfies the SDE 
\begin{align*}
dY_t^{-1} &= - Y_t^{-1} \partial_x b(t, X_t) \, dt \\
&\quad - \sum_{l=1}^d Y_t^{-1} \partial_x \sigma^l(t, X_t) \, dB_t^l \\
&\quad + \sum_{l=1}^d Y_t^{-1} \left( \partial_x \sigma^l(t, X_t) \right)^2 \, dt, \quad Y_0^{-1} = I_m
\end{align*}
while the second variation process satisfies
\begin{align*}
dZ_t &= \left[ \partial_{xx} b(t, X_t) (Y_t \otimes Y_t) + \partial_x b(t, X_t) Z_t \right] \dt \\
&\quad + \sum_{l=1}^d \left[ \partial_{xx} \sigma^l(t, X_t) (Y_t \otimes Y_t) + \partial_x \sigma^l(t, X_t) Z_t \right] dB_t^l, \quad Z_0 = 0.
\end{align*}
\begin{align*}
D_t X_T &= Y_T Y_t^{-1} \sigma(t, X_t).
\end{align*}
Then
\begin{align*}
\frac{\partial}{\partial x} (D_t X_T) &= \frac{\partial}{\partial x} \left( Y_T Y_t^{-1} \sigma(t, X_t) \right) \\
&= \frac{\partial Y_T}{\partial x} \left( Y_t^{-1} \sigma(t, X_t) \right) + Y_T \frac{\partial}{\partial x} \left( Y_t^{-1} \sigma(t, X_t) \right).
\end{align*}
Since \(\partial Y_T/\partial x = Z_T\) we have
\begin{align*}
\frac{\partial Y_T}{\partial x} \left( Y_t^{-1} \sigma(t, X_t) \right) &= Z_T Y_t^{-1} \sigma(t, X_t).
\end{align*}
For the second term
\begin{align*}
\frac{\partial}{\partial x} \left( Y_t^{-1} \sigma(t, X_t) \right) &= \frac{\partial Y_t^{-1}}{\partial x} \sigma(t, X_t) + Y_t^{-1} \frac{\partial \sigma(t, X_t)}{\partial x},
\end{align*}
where \(\partial Y_t^{-1}/\partial x = - Y_t^{-1} Z_t Y_t^{-1}\) and \(\partial \sigma(t, X_t)/\partial x = \partial_x \sigma(t, X_t) Y_t\). Thus
\begin{align*}
\frac{\partial}{\partial x} \left( Y_t^{-1} \sigma(t, X_t) \right) &= - \left( Y_t^{-1} Z_t Y_t^{-1} \right) \sigma(t, X_t) + Y_t^{-1} \left( \partial_x \sigma(t, X_t) Y_t \right),
\end{align*}
\begin{align*}
\frac{\partial}{\partial x} (D_t X_T) &= Z_T Y_t^{-1} \sigma(t, X_t) + Y_T \left( - \left( Y_t^{-1} Z_t Y_t^{-1} \right) \sigma(t, X_t) + Y_t^{-1} \left( \partial_x \sigma(t, X_t) Y_t \right) \right).
\end{align*}
Moreover, since \(\partial X_T/\partial x = Y_T\) we have
\begin{align*}
Y_T &= I_m + \int_0^T \partial_x b(s, X_s) Y_s \, ds + \int_0^T \partial_x \sigma(s, X_s) Y_s \, dB_s,
\end{align*}
\begin{align*}
D_t Y_T &= \partial_x \sigma(t, X_t) Y_t + \int_t^T \left[ \partial_x b(s, X_s) D_t Y_s + \partial_{xx} b(s, X_s) \left( Y_s Y_t^{-1} \sigma(t, X_t) \right) Y_s \right] ds \\
&\quad + \int_t^T \left[ \partial_x \sigma(s, X_s) D_t Y_s + \partial_{xx} \sigma(s, X_s) \left( Y_s Y_t^{-1} \sigma(t, X_t) \right) Y_s \right] dB_s.
\end{align*}
The solution is
\begin{align*}
D_t Y_T &= Z_T Y_t^{-1} \sigma(t, X_t) - Y_T Y_t^{-1} Z_t Y_t^{-1} \sigma(t, X_t) + Y_T Y_t^{-1} \partial_x \sigma(t, X_t) Y_t.
\end{align*}
Clearly the expressions are equal, confirming that
\begin{align*}
D_t \left( \frac{\partial X_T}{\partial x} \right) &= \frac{\partial}{\partial x} \left( D_t X_T \right).
\end{align*}
The regularity conditions ensure all derivatives and integrals are well-defined.

\end{proof}

\begin{lemma}[Malliavin derivative for the first variation process]
\label{lemma:DtYT}
For \( t \leq T \), the Malliavin derivative of the first variation process \( Y_T \) is given by
\begin{align*}
D_t Y_T = &\,Z_T\,Y_t^{-1}\,\sigma(t, X_t) \\
&-\, Y_T\,Y_t^{-1}\,Z_t\,Y_t^{-1}\,\sigma(t, X_t) \\
&+\, Y_T\,Y_t^{-1}\,\partial_x \sigma(t, X_t)\,Y_t.
\end{align*}
\end{lemma}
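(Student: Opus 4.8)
The plan is to derive the formula in two equivalent ways and to present the shorter one as the main proof. The key observation is that $D_t Y_T$ can be obtained either by differentiating, in the Malliavin sense, the linear SDE satisfied by $Y_T$, or — more economically — by invoking the commutativity relation $D_t(\partial_x X_T) = \partial_x(D_t X_T)$ established in Lemma~\ref{lemma:commutativity-malliavin-partials} and then applying the ordinary product rule in $x$.

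For the main proof I would start from $Y_T = \partial X_T/\partial x$, so that by Lemma~\ref{lemma:commutativity-malliavin-partials} and the representation $D_t X_T = Y_T Y_t^{-1}\sigma(t,X_t)$ for $t\le T$ recalled in Section~\ref{sec:variation},
\[
D_t Y_T = \frac{\partial}{\partial x}\bigl(D_t X_T\bigr) = \frac{\partial}{\partial x}\bigl(Y_T\,Y_t^{-1}\,\sigma(t,X_t)\bigr).
\]
I would then expand the right-hand side with the product rule, using three elementary identities: $\partial_x Y_T = Z_T$ by definition of the second variation process; $\partial_x(Y_t^{-1}) = -Y_t^{-1}(\partial_x Y_t)Y_t^{-1} = -Y_t^{-1}Z_t Y_t^{-1}$, obtained by differentiating $Y_tY_t^{-1}=I_m$; and $\partial_x\bigl[\sigma(t,X_t)\bigr] = \partial_x\sigma(t,X_t)\,\partial_x X_t = \partial_x\sigma(t,X_t)\,Y_t$ by the chain rule. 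Collecting the three resulting terms gives exactly $Z_T Y_t^{-1}\sigma(t,X_t) - Y_T Y_t^{-1}Z_t Y_t^{-1}\sigma(t,X_t) + Y_T Y_t^{-1}\partial_x\sigma(t,X_t)Y_t$, as claimed. The $\mathcal C^2_b$ hypotheses on $b,\sigma$ guarantee that $Y_t$, $Y_t^{-1}$ and $Z_t$ have finite moments of every order, so these manipulations are legitimate in $L^2(\Omega)$.

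As an independent check — the route I would take if one prefers not to rely on Lemma~\ref{lemma:commutativity-malliavin-partials} — I would write $Y_T = I_m + \int_0^T \partial_x b(s,X_s)Y_s\,ds + \sum_{l=1}^d\int_0^T \partial_x\sigma^l(s,X_s)Y_s\,dB_s^l$, apply $D_t$ term by term, and use $D_t X_s = Y_s Y_t^{-1}\sigma(t,X_t)\mathbf 1_{\{s\ge t\}}$. Differentiating the Itô integral produces the boundary contribution $\partial_x\sigma(t,X_t)Y_t$ at $s=t$, so that $U_s := D_t Y_s$ solves, for $s\in[t,T]$, the linear SDE
\[
dU_s = \bigl[\partial_x b(s,X_s)U_s + \partial_{xx}b(s,X_s)(D_t X_s\otimes Y_s)\bigr]ds + \sum_{l=1}^d\bigl[\partial_x\sigma^l(s,X_s)U_s + \partial_{xx}\sigma^l(s,X_s)(D_t X_s\otimes Y_s)\bigr]dB_s^l,
\]
with $U_t = \partial_x\sigma(t,X_t)Y_t$. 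One then verifies, using Itô's product rule together with the SDEs for $Z_t$, $Y_t^{-1}$ and $X_t$, that the claimed right-hand side solves this SDE with the correct initial datum; uniqueness for linear SDEs with bounded coefficients closes the argument. The only genuinely delicate point in either route is justifying that the Malliavin derivative may be pushed through the stochastic integral defining $Y_T$ and, in the first route, that it commutes with $\partial_x$ — both handled by the $\mathcal C^2_b$ assumptions and encapsulated in Lemma~\ref{lemma:commutativity-malliavin-partials}; everything else is routine bookkeeping of matrix and tensor indices.
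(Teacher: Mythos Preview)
Your proposal is correct and takes essentially the same approach as the paper: both invoke the commutativity $D_t(\partial_x X_T)=\partial_x(D_tX_T)$ from Lemma~\ref{lemma:commutativity-malliavin-partials}, substitute $D_tX_T=Y_TY_t^{-1}\sigma(t,X_t)$, and expand via the product rule using the identities $\partial_xY_T=Z_T$, $\partial_x(Y_t^{-1})=-Y_t^{-1}Z_tY_t^{-1}$, and $\partial_x[\sigma(t,X_t)]=\partial_x\sigma(t,X_t)\,Y_t$. Your additional SDE-based verification is a nice independent check that the paper does not include in this lemma's proof (though it appears implicitly in the proof of Lemma~\ref{lemma:commutativity-malliavin-partials}).
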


\begin{proof}
\noindent
Consider the SDE satisfied by the first variation process 
\begin{align*}
dY_s 
= \partial_x b(s, X_s)\, Y_s \, ds 
   \;+\;
   \partial_x \sigma(s, X_s)\, Y_s \, dB_s, \qquad 
Y_0 = I.
\end{align*}
Recall that \(Y_s =\partial X_s/\partial x\). 
The Malliavin derivative \( D_t Y_T \) represents the sensitivity of \( Y_T \) 
to a perturbation in the Brownian motion at time \( t \). Since 
\( Y_T = \partial X_T/\partial x \) we have
\begin{align*}
D_t Y_T 
&= D_t \left(\frac{\partial X_T}{\partial x}\right) 
= \frac{\partial}{\partial x} \left(D_t X_T\right).
\end{align*}
This identity follows from interchanging the partial derivative w.r.t.\ \(x\) and the Malliavin derivative \(D_t\). We use the known expression 
\(\displaystyle D_t X_T = Y_T\,\bigl(Y_t^{-1}\,\sigma(t, X_t)\bigr)\) 
which is valid for all \( t \leq T\).  Define \(\displaystyle W_t = Y_t^{-1}\,\sigma(t, X_t)\).  
Thus,
\begin{align*}
D_t X_T 
&= Y_T \,W_t.
\end{align*}
Differentiate this equation with respect to the initial condition \(x\) to obtain
\begin{align*}
\frac{\partial}{\partial x} (D_t X_T)
=\frac{\partial Y_T}{\partial x}W_t +\
   Y_T \,\frac{\partial W_t}{\partial x}.
\end{align*}
By definition of second variation process \(\partial Y_T/\partial x = Z_T\). Hence
\begin{align}
D_t Y_T= Z_T\,W_t + Y_T \frac{\partial W_t}{\partial x}.
\label{a1}
\end{align}
Recalling that \( W_t = Y_t^{-1}\,\sigma(t, X_t)\) we can write
\begin{align*}
\frac{\partial W_t}{\partial x}
&= \frac{\partial}{\partial x} 
   \left(Y_t^{-1} \,\sigma(t, X_t)\right) \\
&= \left(\frac{\partial Y_t^{-1}}{\partial x}\right)\,\sigma(t, X_t)+
   Y_t^{-1}\,\frac{\partial \sigma(t, X_t)}{\partial x}.
\end{align*}
Each of these two terms comes from the product rule (now for partial derivatives w.r.t.\ \(x\)).
Since \(Y_t\,Y_t^{-1} = I\), differentiating both sides w.r.t.\ \(x\) yields 
\(\partial Y_t/\partial x\,Y_t^{-1} + Y_t\,\partial Y_t^{-1}/\partial x = 0.\)  
Hence
\begin{align*}
\frac{\partial Y_t^{-1}}{\partial x}
&= -Y_t^{-1}\,\left(\frac{\partial Y_t}{\partial x}\right)\,Y_t^{-1}
= -Y_t^{-1}Z_tY_t^{-1}.
\end{align*}
We also have \(\partial X_t/\partial x = Y_t\). Thus by chain rule
\begin{align*}
\frac{\partial \sigma(t, X_t)}{\partial x}
&= \partial_x \sigma(t, X_t)\,Y_t.
\end{align*}
Therefore,
\begin{align*}
\frac{\partial W_t}{\partial x}
&= -Y_t^{-1}Z_tY_t^{-1}\sigma(t, X_t)+
   Y_t^{-1}\partial_x \sigma(t, X_t)Y_t.
\end{align*}
This completes the computation of \(\partial W_t/\partial x\). Putting this result back 
into \eqref{a1} yields
\begin{align*}
D_t Y_T
&= Z_T\,Y_t^{-1}\,\sigma(t, X_t)
   \;+\;
   Y_T\Bigl[
        -\,Y_t^{-1}\,Z_t\,Y_t^{-1}\,\sigma(t, X_t)
        \;+\;
        Y_t^{-1}\,\partial_x \sigma(t, X_t)\,Y_t
       \Bigr].
\end{align*}
Factor out common terms to rewrite it in the stated form
\begin{align*}
D_t Y_T = &\,Z_T\,Y_t^{-1}\,\sigma(t, X_t) -\, Y_T\,Y_t^{-1}\,Z_t\,Y_t^{-1}\,\sigma(t, X_t) 
+\, Y_T\,Y_t^{-1}\,\partial_x \sigma(t, X_t)\,Y_t.
\end{align*}
The second term \(\,-\,Y_T\,Y_t^{-1}\,Z_t\,Y_t^{-1}\,\sigma(t, X_t)\)  
accounts for the appearance of the second variation \(Z_t\) inside the inverse, 
while the final term 
\(\,Y_T\,Y_t^{-1}\,\partial_x \sigma(t, X_t)\,Y_t\)
encodes the effect of differentiating \(\sigma\) itself w.r.t.\ \(x\).

\end{proof}

\begin{lemma}[Malliavin derivative formula for the inverse first variation process]
\label{lemma:DtYs_inverse}
For the inverse first variation process \( Y_s^{-1} \), the Malliavin derivative is given by
\begin{itemize}
    \item For \( t \leq s \):
    \begin{align*}
    D_t Y_s^{-1} &= - Y_s^{-1} \left[ Z_s Y_t^{-1} \sigma(t, X_t) - Y_s Y_t^{-1} Z_t Y_t^{-1} \sigma(t, X_t) + Y_s Y_t^{-1} \partial_x \sigma(t, X_t) Y_t \right] Y_s^{-1}
    \end{align*}
    \item For \( t > s \):
    \[
    D_t Y_s^{-1} = 0
    \]
\end{itemize}
where \( Z_s = \partial^2 X_s/\partial x^2 \) is the second variation process.
\end{lemma}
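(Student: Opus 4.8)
The plan is to obtain the formula from two ingredients already at hand: the Malliavin product rule applied to the identity $Y_s Y_s^{-1} = I_m$, and the explicit expression for $D_t Y_s$ supplied by Lemma~\ref{lemma:DtYT} (with the terminal time $T$ replaced by $s$). Everything reduces to combining these, so there is essentially no new analytic content beyond a regularity check.

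First I would record the regularity facts. Since $b,\sigma\in\mathcal{C}^2$ with all derivatives up to second order bounded, the linear SDE for $Y_t$ and the linear SDE for $Y_t^{-1}$ from Lemma~\ref{lemma:dYt-inversxe} both admit unique strong solutions with finite moments of every order, and $Y_s,Y_s^{-1}\in\mathbb{D}^{1,2}$ (indeed $\mathbb{D}^{\infty}$), with $D_t Y_s, D_t Y_s^{-1}\in L^2([0,T]\times\Omega)$. This legitimises applying $D_t$ entrywise to products of these matrices. Next, applying $D_t$ to $Y_s Y_s^{-1}=I_m$ and using the product rule gives $(D_t Y_s)\,Y_s^{-1} + Y_s\,(D_t Y_s^{-1}) = 0$, hence left-multiplying by $Y_s^{-1}$,
\[
D_t Y_s^{-1} = - Y_s^{-1}\,(D_t Y_s)\, Y_s^{-1},
\]
which is exactly Lemma~\ref{lemma:D-of-A-inverse} applied with $A = Y_s$ (the ellipticity hypothesis stated there is only used to guarantee square-integrability of the right-hand side, a role played here instead by the moment bounds just quoted, since $Y_s$ need not be symmetric). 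Then, for $t\le s$, I substitute the expression from Lemma~\ref{lemma:DtYT},
\[
D_t Y_s = Z_s Y_t^{-1}\sigma(t,X_t) - Y_s Y_t^{-1} Z_t Y_t^{-1}\sigma(t,X_t) + Y_s Y_t^{-1}\partial_x\sigma(t,X_t)\,Y_t,
\]
and conjugating by $Y_s^{-1}$ on left and right yields precisely the claimed identity for $t\le s$. For the case $t>s$, I observe that $Y_s$ — and therefore $Y_s^{-1}$ — is $\mathcal{F}_s$-measurable, so a perturbation of the Brownian path at a later time $t$ cannot affect it; equivalently $D_t Y_s = 0$ for $t>s$, and the inversion identity above immediately gives $D_t Y_s^{-1}=0$.

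The only genuine obstacle is the regularity bookkeeping: one must confirm that $Y_s^{-1}\in\mathbb{D}^{1,2}$ with $D_t Y_s^{-1}\in L^2$ so that the Malliavin product rule and the inverse-derivative identity are applicable. This follows from the linear SDE for $Y_s^{-1}$ in Lemma~\ref{lemma:dYt-inversxe} together with standard $L^p$ moment estimates for linear SDEs with bounded coefficients (e.g. Kunita~\citep{kunita1997stochastic}, Ch.~4); once this is in place, the remaining manipulations are purely algebraic and the stated formula follows directly.
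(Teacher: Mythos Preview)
Your proposal is correct and follows essentially the same approach as the paper: both apply the Malliavin product rule to $Y_s Y_s^{-1}=I_m$ to obtain $D_t Y_s^{-1}=-Y_s^{-1}(D_t Y_s)Y_s^{-1}$, then substitute the expression for $D_t Y_s$ from Lemma~\ref{lemma:DtYT} (with $T$ replaced by $s$), and finally invoke $\mathcal{F}_s$-measurability for the case $t>s$. The only cosmetic difference is that the paper re-derives $D_t Y_s$ inline rather than citing Lemma~\ref{lemma:DtYT}, whereas you cite it directly and add an explicit regularity paragraph; neither changes the substance.
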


\begin{proof}
We derive \( D_t Y_s^{-1} \) by applying the Malliavin derivative to the identity \( Y_s Y_s^{-1} = I_m \) and using the product rule. The proof splits into two cases based on the relationship between \( t \) and \( s \), and we leverage the expression for \( D_t Y_s \) derived similarly to \( D_t Y_T \).
Let us begin with the case \( t \leq s \). Since \( Y_s Y_s^{-1} = I_m \) (the \( m \times m \) identity matrix), we apply the Malliavin derivative \( D_t \) to both sides
\begin{align*}
D_t (Y_s Y_s^{-1}) &= D_t (I_m) = 0.
\end{align*}
Using the product rule for Malliavin derivatives
\begin{align*}
D_t (Y_s Y_s^{-1}) = (D_t Y_s) Y_s^{-1} + Y_s (D_t Y_s^{-1}) = 0.
\end{align*}
Rearranging to isolate \( D_t Y_s^{-1} \)
\begin{align*}
Y_s (D_t Y_s^{-1}) &= - (D_t Y_s) Y_s^{-1}, \\
D_t Y_s^{-1} &= - Y_s^{-1} (D_t Y_s) Y_s^{-1}.
\end{align*}
To proceed, we need \( D_t Y_s \). Since \( Y_s = \partial X_s/\partial x \) and \( t \leq s \), we adapt the derivation from the previous Lemma \ref{lemma:DtYT} for \( D_t Y_T \), adjusting the endpoint from \( T \) to \( s \)
\begin{align*}
dY_u = \partial_x b(u, X_u) Y_u \, du + \partial_x \sigma(u, X_u) Y_u \, dB_u, \quad 0 \leq u \leq s, \qquad 
Y_0 = I.
\end{align*}
We have 
\begin{align*}
D_t Y_s &= D_t \left( \frac{\partial X_s}{\partial x} \right) = \frac{\partial}{\partial x} (D_t X_s).
\end{align*}
For \( t \leq s \), the Malliavin derivative of \( X_s \) is
\begin{align*}
D_t X_s &= Y_s Y_t^{-1} \sigma(t, X_t).
\end{align*}
Define \( W_t = Y_t^{-1} \sigma(t, X_t) \), so
\begin{align*}
D_t X_s &= Y_s W_t.
\end{align*}
\begin{align*}
\frac{\partial}{\partial x} (D_t X_s) &= \frac{\partial}{\partial x} (Y_s W_t) = \frac{\partial Y_s}{\partial x} W_t + Y_s \frac{\partial W_t}{\partial x}.
\end{align*}
Since \( \partial Y_s/\partial x = Z_s \) (second variation process)
\begin{align*}
D_t Y_s &= Z_s W_t + Y_s \frac{\partial W_t}{\partial x}.
\end{align*}
Recalling that $W_t = Y_t^{-1} \sigma(t, X_t)$, 
\begin{align*}
\frac{\partial W_t}{\partial x} &= \frac{\partial Y_t^{-1}}{\partial x} \sigma(t, X_t) + Y_t^{-1} \frac{\partial \sigma(t, X_t)}{\partial x}.
\end{align*}
For \( \partial Y_t^{-1}/\partial x \) differentiate \( Y_t Y_t^{-1} = I \) to obtain
\begin{align*}
\frac{\partial Y_t^{-1}}{\partial x} &= - Y_t^{-1} \frac{\partial Y_t}{\partial x} Y_t^{-1} = - Y_t^{-1} Z_t Y_t^{-1}.
\end{align*}
For \( \partial \sigma(t, X_t)/\partial x\), since \( \partial X_t/\partial x = Y_t \), we have
\begin{align*}
\frac{\partial \sigma(t, X_t)}{\partial x} &= \partial_x \sigma(t, X_t) Y_t.
\end{align*}
Therefore
\begin{align*}
\frac{\partial W_t}{\partial x} &= - Y_t^{-1} Z_t Y_t^{-1} \sigma(t, X_t) + Y_t^{-1} \partial_x \sigma(t, X_t) Y_t
\end{align*}
and
\begin{align*}
D_t Y_s &= Z_s Y_t^{-1} \sigma(t, X_t) + Y_s \left( - Y_t^{-1} Z_t Y_t^{-1} \sigma(t, X_t) + Y_t^{-1} \partial_x \sigma(t, X_t) Y_t \right), \\
&= Z_s Y_t^{-1} \sigma(t, X_t) - Y_s Y_t^{-1} Z_t Y_t^{-1} \sigma(t, X_t) + Y_s Y_t^{-1} \partial_x \sigma(t, X_t) Y_t.
\end{align*}
Finally,  substitute \( D_t Y_s \) into \( D_t Y_s^{-1} \) to obtain
\begin{align*}
D_t Y_s^{-1} &= - Y_s^{-1} \left[ Z_s Y_t^{-1} \sigma(t, X_t) - Y_s Y_t^{-1} Z_t Y_t^{-1} \sigma(t, X_t) + Y_s Y_t^{-1} \partial_x \sigma(t, X_t) Y_t \right] Y_s^{-1}.
\end{align*}
Since \( Y_s^{-1} \) is adapted to the filtration up to time \( s \), and \( t > s \), a perturbation in the Brownian motion at time \( t \) does not affect \( Y_s^{-1} \) (which depends only on information up to \( s \)). Thus
\begin{align*}
D_t Y_s^{-1} &= 0.
\end{align*}
This completes the proof, with the expression for \( t \leq s \) matching the Lemma statement, and the zero result for \( t > s \) reflecting the causality of the stochastic process.

\end{proof}

\subsection{Proof of Theorem \ref{thm:main}}
Having established all necessary preliminary results, we now proceed to prove our Theorem \ref{thm:main}.

\begin{proof}
Applying Theorem \ref{thm:nualart}, the Skorokhod 
integral \( \delta(u_k) = \delta(u(F_k)) \) is
\begin{align}
\delta(u_k) = \left. \int_0^T u_t(x) \cdot dB_t \right|_{x=F_k} - \sum_{j=1}^m \int_0^T \partial_j u_t(F_k) \cdot D_t F_k^j \, dt.
\label{Snualart}
\end{align}
This expression combines an It\^o integral evaluated at \( x = F_k \) with a correction term involving the partial derivatives of \( u_t(x) \) evaluated at \( x = F_k \) 
and the Malliavin derivatives of \( F_k^j \).
The first term in the expression for \( \delta(u(F_k)) \) is the Itô integral evaluated at \( x = F_k \)
\begin{align*}
\left. \int_0^T u_t(x) \cdot dB_t \right|_{x=F_k},
\end{align*}
where for each fixed \( x \), \( u_t(x) = x^\top Y_t^{-1} \sigma(t, X_t) \) is an adapted process, so \( \int_0^T u_t(x) \cdot dB_t \) is a well-defined Itô integral, and after computing this integral, we evaluate it at \( x = F_k = Y_T^\top \gamma_{X_T}^{-1} e_k \), which is \( \mathcal{F}_T \)-measurable.
This term is computationally manageable because the integration is performed with respect to an adapted integrand for fixed \( x \), and the randomness of \( F_k \) is introduced only after the integration.
With the redefined random field
\begin{align*}
u_t(x) &= x^\top Y_t^{-1} \sigma(t, X_t) = \sum_{i=1}^m x_i \left[ Y_t^{-1} \sigma(t, X_t) \right]_i,
\end{align*}
the partial derivative with respect to \( x_j \) is
\begin{align*}
\partial_j u_t(x) &= \frac{\partial}{\partial x_j} u_t(x) = \left[ Y_t^{-1} \sigma(t, X_t) \right]_j,
\end{align*}
since only the term involving \( x_j \) depends on \( x_j \). Therefore, evaluating at \( x = F_k \)
\begin{align*}
\partial_j u_t(F_k) &= \left[ Y_t^{-1} \sigma(t, X_t) \right]_j.
\end{align*}
This term will appear in the correction term of the Skorokhod integral decomposition.
Before proceeding further, we state a general result from Lemma \ref{lemma:D-of-A-inverse} for the Malliavin derivative of the inverse of a random matrix.
To compute the Malliavin derivative \( D_t F_k^j \), consider the new definition
\begin{align*}
F_k = Y_T^\top \gamma_{X_T}^{-1} e_k, \qquad 
F_k^j = e_j^\top Y_T^\top \gamma_{X_T}^{-1} e_k,
\end{align*}
where \( Y_T \) is the first variation process at time \( T \), \( \gamma_{X_T} \) is the Malliavin covariance matrix, and \( e_j, e_k \) are standard basis vectors. Applying the Malliavin derivative
\begin{align*}
D_t F_k^j &= e_j^\top D_t (Y_T^\top \gamma_{X_T}^{-1}) e_k.
\end{align*}
Using the product rule, we obtain
\begin{align*}
D_t (Y_T^\top \gamma_{X_T}^{-1}) &= (D_t Y_T^\top) \gamma_{X_T}^{-1} + Y_T^\top D_t (\gamma_{X_T}^{-1}), \\
D_t F_k^j &= e_j^\top (D_t Y_T^\top) \gamma_{X_T}^{-1} e_k + e_j^\top Y_T^\top D_t (\gamma_{X_T}^{-1}) e_k.
\end{align*}
%
%\begin{align*}
%dY_t &= \partial_x b(t, X_t) Y_t \, dt + \sum_{l=1}^d \partial_x \sigma^l(t, X_t) Y_t \, dB_t^l, \quad Y_0 = I_m.
%\end{align*}
For \( t \leq T \), the Malliavin derivative \( D_t Y_T \) is given in Lemma \ref{lemma:DtYT} 
\begin{align*}
D_t Y_T &= Z_T Y_t^{-1} \sigma(t, X_t) - Y_T Y_t^{-1} Z_t Y_t^{-1} \sigma(t, X_t) + Y_T Y_t^{-1} \partial_x \sigma(t, X_t) Y_t,
\end{align*}
where \( Z_t \) is the second variation process. Taking the transpose
\begin{align*}
D_t Y_T^\top &= \left[ Z_T Y_t^{-1} \sigma(t, X_t) - Y_T Y_t^{-1} Z_t Y_t^{-1} \sigma(t, X_t) + Y_T Y_t^{-1} \partial_x \sigma(t, X_t) Y_t \right]^\top.
\end{align*}
From  Lemma \ref{lemma:D-of-A-inverse} we have that for an invertible random matrix \( A \)
\begin{align*}
D_t (A^{-1}) &= - A^{-1} (D_t A) A^{-1}. 
\end{align*}
For the specific case of \( A = \gamma_{X_T} \) this yields
\begin{align*}
D_t (\gamma_{X_T}^{-1}) &= - \gamma_{X_T}^{-1} (D_t \gamma_{X_T}) \gamma_{X_T}^{-1},
\end{align*}
where
\begin{align*}
\gamma_{X_T} &= \int_0^T Y_T Y_s^{-1} \sigma(s, X_s) \sigma(s, X_s)^\top (Y_s^{-1})^\top Y_T^\top \, ds,
\end{align*}
and \( D_t \gamma_{X_T} \) requires computing the Malliavin derivative 
of the integrand. Thus,
\begin{align*}
D_t F_k^j &= e_j^\top (D_t Y_T^\top) \gamma_{X_T}^{-1} e_k + e_j^\top Y_T^\top D_t (\gamma_{X_T}^{-1}) e_k, \\
D_t F_k^j &= e_j^\top (D_t Y_T^\top) \gamma_{X_T}^{-1} e_k - e_j^\top Y_T^\top \gamma_{X_T}^{-1} (D_t \gamma_{X_T}) \gamma_{X_T}^{-1} e_k,
\end{align*}
with \( D_t Y_T^\top \) and \( D_t \gamma_{X_T} \) as derived.
We now proceed by computing the Malliavin derivative 
\(\displaystyle D_t \Bigl[(Y_T\,Y_s^{-1}\,\sigma(s,X_s))^p\Bigr]\). 
To this end, we first recall Lemma \ref{lemma:DtYT} which gives  the 
precise form of \(\displaystyle D_t Y_T\). Afterwards, we use the product 
rule for Malliavin derivatives on the product \(\,Y_T \,Y_s^{-1}\,\sigma(s,X_s)\), 
distinguishing between the cases \(t \le s\) and \(t > s\). 
Finally, we assemble these pieces to obtain the expression for 
\(\displaystyle D_t (Y_T Y_s^{-1} \sigma(s, X_s))^p\).  
We provide reasoning for each step to clarify why each term appears 
and how the partial derivatives interact with the inverse processes.
Let 
\[
W_s^p 
\;=\;
\Bigl(Y_T\,Y_s^{-1}\,\sigma(s, X_s)\Bigr)^p,
\]
i.e.\ the \(p\)-th component of the vector \(Y_T\,Y_s^{-1}\,\sigma(s,X_s)\).  
We want to find 
\(\displaystyle D_t \bigl(W_s^p\bigr)\).  
Since
\[
W_s^p 
= 
\bigl(Y_T\,Y_s^{-1}\,\sigma(s, X_s)\bigr)^p,
\]
we begin with the Malliavin derivative of the product 
\(\,Y_T\,Y_s^{-1}\,\sigma(s,X_s)\).  
\begin{align*}
D_t W_s^p 
&= 
D_t \Bigl(\bigl(Y_T\,Y_s^{-1}\,\sigma(s, X_s)\bigr)^p\Bigr).
\end{align*}

\begin{itemize}
    \item \textbf{Case 1: \(t \leq s\).} 
    In this scenario, a “kick” in the Brownian motion at time \(t\) does 
    affect \(X_s\) (and hence \(Y_s\)). Thus
    \begin{align*}
    D_t \bigl(Y_T\,Y_s^{-1}\,\sigma(s, X_s)\bigr)
    &= 
   D_t Y_T
      \;\cdot\; 
      \bigl(Y_s^{-1}\,\sigma(s, X_s)\bigr)
    \;+\;
    Y_T \;\,\underbrace{D_t\bigl(Y_s^{-1}\,\sigma(s, X_s)\bigr)}_{\text{chain rule}}.
    \end{align*}
Using Lemma \ref{lemma:DtYT} we write $D_t Y_T$ as 
        \begin{align*}
        D_t Y_T &= Z_T\,Y_t^{-1}\,\sigma(t, X_t) - Y_T\,Y_t^{-1}\,Z_t\,Y_t^{-1}\,\sigma(t, X_t) \\
        &\quad + Y_T\,Y_t^{-1}\,\partial_x \sigma(t, X_t)\,Y_t
        \end{align*}
        Note that 
        \begin{align}
        D_t \bigl(Y_s^{-1}\,\sigma(s, X_s)\bigr)
        &= 
        \bigl(D_t Y_s^{-1}\bigr)\,\sigma(s, X_s)
          \;+\;
          Y_s^{-1}\,\bigl(D_t \sigma(s, X_s)\bigr).
          \label{a5}
        \end{align}
        For \( t \leq s \), the Malliavin derivative of the inverse is
        \begin{align*}
        D_t Y_s^{-1} = -Y_s^{-1} \big[ &Z_s Y_t^{-1} \sigma(t, X_t) \\
        &- Y_s Y_t^{-1} Z_t Y_t^{-1} \sigma(t, X_t) \\
        &+ Y_s Y_t^{-1} \partial_x \sigma(t, X_t) Y_t \big] Y_s^{-1}.
        \end{align*}
    This expression accounts for the second variation processes \( Z_s \) and \( Z_t \), as well as the derivative of the diffusion coefficient.
Applying the chain rule
        \begin{align*}
        D_t \sigma(s, X_s)
        \;=\;
        \partial_x \sigma(s, X_s)\,\bigl(Y_s\,Y_t^{-1}\,\sigma(t, X_t)\bigr).
        \end{align*}
        Substituting into \eqref{a5} yields
        \begin{align*}
        D_t \bigl(Y_s^{-1}\,\sigma(s, X_s)\bigr) &= - Y_s^{-1} \Bigl[ Z_s Y_t^{-1} \sigma(t, X_t) \\
        &\quad - Y_s Y_t^{-1} Z_t Y_t^{-1} \sigma(t, X_t) \\
        &\quad + Y_s Y_t^{-1} \partial_x \sigma(t, X_t) Y_t \Bigr] Y_s^{-1} \sigma(s, X_s) \\
        &\quad + Y_s^{-1} \partial_x \sigma(s, X_s) \bigl(Y_s Y_t^{-1} \sigma(t, X_t)\bigr).
        \end{align*}
\end{itemize}
Thus, for \(t \leq s\)
\begin{align*}
\begin{aligned}
D_t W_s^p =
 \Biggl[&\Bigl( Z_T\,Y_t^{-1}\,\sigma(t, X_t)
-\, Y_T\,Y_t^{-1}\,Z_t\,Y_t^{-1}\,\sigma(t, X_t) \\
&\quad +\, Y_T\,Y_t^{-1}\,\partial_x \sigma(t, X_t)\,Y_t \Bigr)\,Y_s^{-1}\,\sigma(s, X_s) \\
&\quad +\, Y_T \Biggl( 
-\, Y_s^{-1}\,\Bigl[ Z_s Y_t^{-1}\,\sigma(t, X_t)
-\, Y_s Y_t^{-1}\,Z_t Y_t^{-1}\,\sigma(t, X_t) \\
&\quad +\, Y_s Y_t^{-1}\,\partial_x \sigma(t, X_t) Y_t \Bigr]\,Y_s^{-1}\,\sigma(s, X_s) \\
&\quad +\, Y_s^{-1}\,\partial_x \sigma(s, X_s) \Bigl( Y_s Y_t^{-1}\,\sigma(t, X_t) \Bigr)
\Biggr)\Biggr]^p.
\end{aligned}
\end{align*}
We place the entire sum inside brackets \(\,[\dots]^p\) because we are taking 
the \(p\)-th component of the resulting vector.
\begin{itemize}
    \item \textbf{Case 2: \(t > s\).} 
    In this case, a Brownian perturbation at time \(t\) does \emph{not} affect \(X_s\) 
    (nor \(Y_s\)) because \(s<t\).  
    Hence
    \begin{align*}
    D_t \bigl(Y_s^{-1}\,\sigma(s, X_s)\bigr)
    &= 
    0,
    \end{align*}
    and the only contribution is from \(D_t Y_T\). 
    Therefore,
    \begin{multline*}
    D_t W_s^p
    = \Biggl[
      \Bigl(
        Z_T\,Y_t^{-1}\,\sigma(t, X_t) \\
        -\, Y_T\,Y_t^{-1}\,Z_t\,Y_t^{-1}\,\sigma(t, X_t)
        +\, Y_T\,Y_t^{-1}\,\partial_x \sigma(t, X_t)\,Y_t
      \Bigr)\,Y_s^{-1}\,\sigma(s, X_s)
    \Biggr]^p.
    \end{multline*}
\end{itemize}
Next, we recall that 
\[\displaystyle \gamma_{X_T}^{p,q} = \int_0^T [Y_T Y_s^{-1} \sigma(s, X_s)]^p \,\cdot\,[Y_T Y_s^{-1} \sigma(s, X_s)]^q\,ds.\]
Recall that $\gamma_{X_T}^{p,q} = \int_0^T [Y_T Y_s^{-1} \sigma(s, X_s)]^p \cdot [Y_T Y_s^{-1} \sigma(s, X_s)]^q\,ds$. Taking the Malliavin derivative and using the product rule
\begin{align}
D_t \gamma_{X_T}^{p,q} &= \int_0^T \Big[ D_t W_s^p \cdot W_s^q + W_s^p \cdot D_t W_s^q \Big]\,ds,
\label{eq:Dt_gamma_compact}
\end{align}
where $W_s^p = (Y_T Y_s^{-1} \sigma(s, X_s))^p$ and we computed $D_t W_s^p$ in Cases 1 and 2 above.
To evaluate \eqref{eq:Dt_gamma_compact}, we split the integration region into $[0,t]$ and $[t,T]$ to reflect the piecewise definitions of $D_t W_s^p$
\begin{itemize}
\item For $s \in [0,t]$: Use the Case 1 expression for $D_t W_s^p$ and $D_t W_s^q$
\item For $s \in [t,T]$: Use the Case 2 expression for $D_t W_s^p$ and $D_t W_s^q$
\end{itemize}
The resulting expression for $D_t \gamma_{X_T}^{p,q}$ involves the second variation processes $Z_t, Z_s, Z_T$ and partial derivatives of the diffusion coefficients, as detailed in the computations above.
We now handle the correction term 
\(\displaystyle \sum_{j=1}^m \partial_j u_t(F_k) \,\cdot\, D_t F_k^j\) 
that appears in the Skorokhod integral decomposition
$\delta(u_k) = \left. \int_0^T u_t(x) \cdot dB_t \right|_{x=F_k} \;-\;\sum_j \int_0^T \partial_j u_t(F_k)\,\cdot\,D_tF_k^j\,dt$.
Hereafter we show how each step follows from the chain rule in 
Malliavin calculus, the use of 
\(\displaystyle D_t (\gamma_{X_T}^{-1}) = -\,\gamma_{X_T}^{-1}\,(D_t \gamma_{X_T})\,\gamma_{X_T}^{-1},\) 
and the expression for 
\(\displaystyle D_t \gamma_{X_T}^{p,q}\). 
We also show how to integrate the resulting expression over \(t\). 
Let us begin by recalling the general formula for the Skorokhod integral
\begin{align*}
\delta(u_k) 
&= \left. \int_0^T u_t(x)\cdot dB_t \right|_{x=F_k} - \int_0^T \sum_{j=1}^m \partial_j u_t(F_k) \,\cdot\, D_t \bigl(F_k^j\bigr) \,dt.
\end{align*}
The term 
\(\displaystyle \sum_{j=1}^m \partial_j u_t(F_k) \,\cdot\,D_t F_k^j\)
is often called the ``correction term.''  
We have already found  that 
\(\,\partial_j u_t(F_k) = \bigl[Y_t^{-1}\,\sigma(t, X_t)\bigr]_j,\) 
and 
\(\,F_k^j = e_j^\top Y_T^\top \gamma_{X_T}^{-1} e_k.\)  
\\Since 
\(\,D_t(\gamma_{X_T}^{-1}) = -\,\gamma_{X_T}^{-1}\,(D_t\gamma_{X_T})\,\gamma_{X_T}^{-1},\) 
we obtain
\[
D_t F_k^j = D_t(e_j^\top Y_T^\top \gamma_{X_T}^{-1} e_k) 
= e_j^\top (D_t Y_T^\top) \gamma_{X_T}^{-1} e_k - e_j^\top Y_T^\top \gamma_{X_T}^{-1} (D_t \gamma_{X_T}) \gamma_{X_T}^{-1} e_k.
\]
Hence,
\begin{align}
\label{eq:correction-term-raw}
\sum_{j=1}^m \partial_j u_t(F_k) \cdot D_t F_k^j 
&= \sum_{j=1}^m \bigl[ Y_t^{-1} \sigma(t, X_t) \bigr]_j 
   \left( e_j^\top (D_t Y_T^\top) \gamma_{X_T}^{-1} e_k - e_j^\top Y_T^\top \gamma_{X_T}^{-1} (D_t \gamma_{X_T}) \gamma_{X_T}^{-1} e_k \right).
\end{align}
Now, recall that \(\displaystyle D_t \gamma_{X_T}^{p,q}\) splits into integrals over \([0,t]\) and \([t,T]\), 
and contains contributions from terms like 
\(Z_T\,Y_t^{-1}\,\sigma(t,X_t)\), 
\(\,Y_T\,Y_t^{-1}\,Z_t\,Y_t^{-1}\,\sigma(t,X_t)\), 
\(\,Y_T\,Y_t^{-1}\,\partial_x \sigma(t,X_t)\,Y_t,\) 
and so forth  (as detailed in Cases 1 and 2). 
To complete the calculation of the correction term, we substitute \eqref{eq:Dt_gamma_compact} into \eqref{eq:correction-term-raw}, reorganise by splitting the integration regions, and integrate from $t=0$ to $t=T$. This yields,
\begin{align*}
\int_0^T \sum_{j=1}^m \partial_j u_t(F_k) \,\cdot\, D_t F_k^j \, dt &= \int_0^T \sum_{j=1}^m \bigl[ Y_t^{-1} \sigma(t, X_t) \bigr]_j \cdot A_{jk}(t) \, dt \\
&\quad - \int_0^T \sum_{j=1}^m \bigl[ Y_t^{-1} \sigma(t, X_t) \bigr]_j \cdot B_{jk}(t) \, dt \\
&\quad - \int_0^T \sum_{j=1}^m \bigl[ Y_t^{-1} \sigma(t, X_t) \bigr]_j \cdot C_{jk}(t) \, dt
\end{align*}
where 
\begin{align*}
A_{jk}(t) &= e_j^\top \bigg[ \sigma(t, X_t)^\top (Y_t^{-1})^\top Z_T^\top - \sigma(t, X_t)^\top (Y_t^{-1})^\top Z_t^\top (Y_t^{-1})^\top Y_T^\top \\
&\quad\quad\quad + \left( Y_T Y_t^{-1} \partial_x \sigma(t, X_t) Y_t \right)^\top \bigg] \gamma_{X_T}^{-1} e_k \\[0.5em]
B_{jk}(t) &= e_j^\top Y_T^\top \gamma_{X_T}^{-1} \cdot \bigg[ \int_0^t I_1(t,s) \, ds + \int_0^t I_2(t,s) \, ds \bigg] \gamma_{X_T}^{-1} e_k \\[0.5em]
C_{jk}(t) &= e_j^\top Y_T^\top \gamma_{X_T}^{-1} \cdot \bigg[ \int_t^T I_3(t,s) \, ds + \int_t^T I_4(t,s) \, ds \bigg] \gamma_{X_T}^{-1} e_k
\end{align*}
with the integrands having components
\begin{align*}
I^{p,q}_1(t,s) &= \bigg[ \Omega(t) Y_s^{-1}\sigma(s, X_s) \bigg]^p \cdot [Y_T Y_s^{-1} \sigma(s, X_s)]^q \\[0.5em]
I^{p,q}_2(t,s) &= [Y_T Y_s^{-1} \sigma(s, X_s)]^p \cdot \bigg[ \Omega(t) Y_s^{-1}\sigma(s, X_s) \bigg]^q \\[0.5em]
I^{p,q}_3(t,s) &= \bigg[ \Omega(t) Y_s^{-1}\sigma(s, X_s) + Y_T \Theta(t,s) \bigg]^p \cdot [Y_T Y_s^{-1} \sigma(s, X_s)]^q \\[0.5em]
I^{p,q}_4(t,s) &= [Y_T Y_s^{-1} \sigma(s, X_s)]^p \cdot \bigg[ \Omega(t) Y_s^{-1}\sigma(s, X_s) + Y_T \Theta(t,s) \bigg]^q,
\end{align*}
where
\begin{align*}
\Omega(t) &= Z_T Y_t^{-1} \sigma(t, X_t) - Y_T Y_t^{-1} Z_t Y_t^{-1} \sigma(t, X_t) + Y_T Y_t^{-1} \partial_x \sigma(t, X_t) Y_t \\[0.5em]
\Theta(t,s) &= -Y_s^{-1} \bigg[ Z_s Y_t^{-1} \sigma(t, X_t) - Y_s Y_t^{-1} Z_t Y_t^{-1} \sigma(t, X_t) \\
&\quad + Y_s Y_t^{-1} \partial_x \sigma(t, X_t) Y_t \bigg] Y_s^{-1} \sigma(s, X_s) \\
&\quad + Y_s^{-1} \partial_x \sigma(s, X_s) \bigg(Y_s Y_t^{-1} \sigma(t, X_t) \bigg).
\end{align*}
These expansions show how the correction term 
\(\sum_{j=1}^m \partial_j u_t(F_k) \cdot D_t F_k^j\) 
expands in terms of 
\(\,D_t \gamma_{X_T}^{p,q}\). 
 %
%They are crucial in deriving the final expression for 
%\(\delta(u_k)\) within Bismut’s formula for 
%\(\partial_{y_k}\,\log p(y)\). 
%
This yields the final formula for the  Skorokhod integral is given in \eqref{score1}.

\noindent

\end{proof}

\section{State-independent diffusion processes}
\label{sec:simplifiedformula}

In this section, we derive a simplified expression for the Skorokhod integral $\delta(u_k)$ for SDEs with state-independent diffusion coefficients. 
Such SDEs are of interest in diffusion generative modelling because the noise amplitude remains identical across all sample paths, and can be chosen a priori via an appropriate noise scheduler. From a numerical standpoint, stochastic differential equations with state-independent diffusion coefficients exhibit reduced stiffness compared to those with multiplicative noise. This allows for larger time steps during sampling and offers direct control over the evolution of the marginal variance. This simplifies the computation of likelihood weighting and score-matching losses. As a result, most practical architectures for diffusion-based generative modelling are based on state-independent diffusion coefficients, such as the VP (Variance-Preserving), VE (Variance-Exploding), and sub-VP SDEs introduced in \cite{song2021scorebased}. Hence, consider SDEs of the form
\[
  dX_t \;=\; b(t,X_t)\,dt \;+\; \sigma(t)\,dB_t,
\]
where the diffusion coefficient $\sigma:[0,T]\to\mathbb{R}^{m\times d}$ depends on time~$t$ only. This property will enable the simplifications of $\delta(u_k)$ stated in the forthcoming corollary.
Also, for the linear case with additive (state-independent) noise, the resulting expression reduces to the classical score formulae for linear Gaussian SDEs, [see \cite{mirafzali2025malliavincalculusscorebaseddiffusion}].

\begin{corollary}[Skorokhod integral for state-independent diffusion]
\label{cor:state_independent}
Consider the stochastic differential equation
\[dX_t = b(t, X_t) \, dt + \sigma(t) \, dB_t,\]
where the diffusion coefficient \(\sigma: [0, T] \to \mathbb{R}^{m \times d}\) depends exclusively on time \(t\). Under the assumptions of Theorem~\ref{thm:main}, the Skorokhod integral \(\delta(u_k)\) simplifies to
\begin{align*}
\delta(u_k) &= \left. \int_0^T u_t(x) \cdot dB_t \right|_{x=F_k} \\
&\quad - \int_0^T \sum_{j=1}^m \bigl[ Y_t^{-1} \sigma(t) \bigr]_j \cdot A_{jk}(t) \, dt \\
&\quad + \int_0^T \sum_{j=1}^m \bigl[ Y_t^{-1} \sigma(t) \bigr]_j \cdot B_{jk}(t) \, dt \\
&\quad + \int_0^T \sum_{j=1}^m \bigl[ Y_t^{-1} \sigma(t) \bigr]_j \cdot C_{jk}(t) \, dt
\end{align*}
where
\begin{align*}
A_{jk}(t) &= e_j^\top \bigg[ \sigma(t)^\top (Y_t^{-1})^\top Z_T^\top - \sigma(t)^\top (Y_t^{-1})^\top Z_t^\top (Y_t^{-1})^\top Y_T^\top \bigg] \gamma_{X_T}^{-1} e_k \\[0.5em]
B_{jk}(t) &= e_j^\top Y_T^\top \gamma_{X_T}^{-1} \cdot \bigg[ \int_0^t I_1(t,s) \, ds \bigg] \gamma_{X_T}^{-1} e_k \\[0.5em]
C_{jk}(t) &= e_j^\top Y_T^\top \gamma_{X_T}^{-1} \cdot \bigg[ \int_t^T I_2(t,s) \, ds \bigg] \gamma_{X_T}^{-1} e_k
\end{align*}
with integrands
\begin{align*}
I_1(t,s) &= \bigg[ \Omega(t) Y_s^{-1}\sigma(s) \bigg] W_s^\top + W_s \bigg[ \Omega(t) Y_s^{-1}\sigma(s) \bigg]^\top \\[0.5em]
I_2(t,s) &= \bigg[ \Phi(t,s) \bigg] W_s^\top + W_s \bigg[ \Phi(t,s) \bigg]^\top
\end{align*}
and auxiliary processes
\begin{align*}
\Omega(t) &= Z_T Y_t^{-1} \sigma(t) - Y_T Y_t^{-1} Z_t Y_t^{-1} \sigma(t) \\[0.5em]
\Phi(t,s) &= \Omega(t) Y_s^{-1}\sigma(s) - Y_T Y_s^{-1} \bigg[ Z_s Y_t^{-1} \sigma(t) - Y_s Y_t^{-1} Z_t Y_t^{-1} \sigma(t) \bigg] Y_s^{-1} \sigma(s) \\[0.5em]
W_s &= Y_T Y_s^{-1} \sigma(s)
\end{align*}
where \(u_t(x) = x^\top Y_t^{-1} \sigma(t)\) and \(F_k = Y_T^\top \gamma_{X_T}^{-1} e_k\). The state-independence of \(\sigma(t)\) eliminates all terms involving \(\partial_x \sigma(t, X_t)\), significantly simplifying the expression compared to the general case.
\end{corollary}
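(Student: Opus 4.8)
The plan is to obtain the corollary directly as the state-independent specialization of Theorem~\ref{thm:main}: every quantity in the general formula is written explicitly in terms of $b$, $\sigma$, their spatial derivatives, and the variation processes $Y_t, Z_t$, so it suffices to impose $\sigma(t,X_t)\equiv\sigma(t)$ and propagate the consequences. First I would record the elementary fact that a time-only diffusion coefficient satisfies $\partial_x\sigma^l(t,X_t)=0$ and $\partial_{xx}\sigma^l(t,X_t)=0$ for every $l=1,\dots,d$; in particular the stochastic terms of the first- and second-variation SDEs drop out, so $Y_t$ and $Z_t$ now solve random linear ODEs driven by the path of $X_t$. This does not alter the \emph{form} of any expression in Theorem~\ref{thm:main} --- it merely kills the terms in which $\partial_x\sigma$ appears explicitly, and those are the only simplifications the corollary claims.

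Next I would substitute term by term. In $u_t(x)=x^\top Y_t^{-1}\sigma(t,X_t)$ and in $\partial_j u_t(x)=[Y_t^{-1}\sigma(t,X_t)]_j$ one replaces $\sigma(t,X_t)$ by $\sigma(t)$. In $A_{jk}(t)$ the third summand $\bigl(Y_TY_t^{-1}\partial_x\sigma(t,X_t)Y_t\bigr)^\top$ disappears, leaving the two-term bracket of the corollary. In $\Omega(t)$ the summand $Y_TY_t^{-1}\partial_x\sigma(t,X_t)Y_t$ disappears, giving $\Omega(t)=Z_TY_t^{-1}\sigma(t)-Y_TY_t^{-1}Z_tY_t^{-1}\sigma(t)$. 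In $\Theta(t,s)$ both the $\partial_x\sigma(t,X_t)$ term inside the inner bracket and the entire trailing summand $Y_s^{-1}\partial_x\sigma(s,X_s)\bigl(Y_sY_t^{-1}\sigma(t,X_t)\bigr)$ vanish, so that $Y_T\Theta(t,s)=-Y_TY_s^{-1}\bigl[Z_sY_t^{-1}\sigma(t)-Y_sY_t^{-1}Z_tY_t^{-1}\sigma(t)\bigr]Y_s^{-1}\sigma(s)$; setting $\Phi(t,s)=\Omega(t)Y_s^{-1}\sigma(s)+Y_T\Theta(t,s)$ reproduces the corollary's auxiliary process.

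The one step that requires genuine care is the repackaging of the four scalar componentwise integrands $I_1^{p,q},I_2^{p,q},I_3^{p,q},I_4^{p,q}$ of Theorem~\ref{thm:main} into the two matrix-valued integrands $I_1(t,s),I_2(t,s)$ of the corollary. Writing $W_s=Y_TY_s^{-1}\sigma(s)$ and recalling that the dot products in the $I^{p,q}$ are contractions over the $d$ columns coming from $\sigma$, the key identities are $I_1^{p,q}(t,s)+I_2^{p,q}(t,s)=\bigl[\bigl(\Omega(t)Y_s^{-1}\sigma(s)\bigr)W_s^\top+W_s\bigl(\Omega(t)Y_s^{-1}\sigma(s)\bigr)^\top\bigr]^{p,q}$ and $I_3^{p,q}(t,s)+I_4^{p,q}(t,s)=\bigl[\Phi(t,s)W_s^\top+W_s\Phi(t,s)^\top\bigr]^{p,q}$; the symmetrized form $XW_s^\top+W_sX^\top$ is precisely the combinatorial shadow of the product rule applied to $\gamma_{X_T}^{p,q}=\int_0^T[W_s]^p\cdot[W_s]^q\,ds$, which is symmetric under $p\leftrightarrow q$, with one copy of the derivative landing on the $p$-factor ($I_1,I_3$) and the other on the $q$-factor ($I_2,I_4$). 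Substituting these into $B_{jk}(t)=e_j^\top Y_T^\top\gamma_{X_T}^{-1}\bigl[\int_0^t(I_1+I_2)\,ds\bigr]\gamma_{X_T}^{-1}e_k$ and $C_{jk}(t)=e_j^\top Y_T^\top\gamma_{X_T}^{-1}\bigl[\int_t^T(I_3+I_4)\,ds\bigr]\gamma_{X_T}^{-1}e_k$ from Theorem~\ref{thm:main} yields the $B_{jk},C_{jk}$ of the corollary, and plugging everything back into \eqref{score1} gives the stated $\delta(u_k)$; composing with the conditional expectation \eqref{score} gives the score. I expect the main obstacle to be exactly this bookkeeping --- matching the tensor and index conventions of the two formulations and verifying the symmetrization --- since no new stochastic analysis is required beyond what Theorem~\ref{thm:main} already provides.
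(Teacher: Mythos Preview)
Your proposal is correct and matches the paper's approach: the corollary is stated without an explicit proof, its last sentence simply noting that state-independence of $\sigma(t)$ eliminates all terms involving $\partial_x\sigma(t,X_t)=0$, which is precisely the substitution-and-simplification you carry out. Your explicit treatment of the repackaging of $I_1^{p,q}+I_2^{p,q}$ and $I_3^{p,q}+I_4^{p,q}$ into the symmetrized matrix forms $VW_s^\top+W_sV^\top$ is in fact more detailed than anything the paper provides for this step.
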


%\section{Summary}
\label{sec:summary}

\section*{Acknowledgements}
\noindent\textbf{AI assistance.} We used ChatGPT only for minor editing (typos and wording). No proofs or mathematical content were generated, and no scientific content was altered.

\section*{Disclosure statement}
The authors declare no potential conflict of interest.

\section*{Funding}
Daniele Venturi was supported by the U.S. Department of Energy (DOE) grant DE-SC0024563.

\section{References}

%\bibliographystyle{tfs}
%\bibliography{interacttfssample}

\end{document}